\tikzset{>=stealth}
\newtheorem{theorem}{Theorem}
\newtheorem{definition}{Definition}
\newtheorem{remark}{Remark}
\newtheorem{problem}{Problem}
\newtheorem{example}{Example}
\newtheorem{assumption}{Assumption}
\newtheorem{lemma}{Lemma}
\newtheorem{proposition}{Proposition}
\title{A Path-Dependent Variational Framework for Incremental Information Gathering  \thanks{This work was funded by DMS-1645643. Funding for M. Ghaffari was in part provided by the Toyota Research Institute (TRI), partly under award number N021515.}}
\author{William Clark\thanks{Department of Mathematics, Cornell University, Ithaca, NY
  ({wac76@cornell.edu}).}
\and Maani Ghaffari\thanks{Department of Naval Architecture \& Marine Engineering, University of Michigan, Ann Arbor, MI 
  ({maanigj@umich.edu.edu}).}
}
\begin{document}

\maketitle

\begin{abstract}
  Information gathered along a path is inherently submodular; the incremental amount of information gained along a path decreases due to redundant observations. In addition to submodularity, the incremental amount of information gained is a function of not only the current state but also the entire history as well. This paper presents the construction of the first-order necessary optimality conditions for memory (history-dependent) Lagrangians. Path-dependent problems frequently appear in robotics and artificial intelligence, where the state such as a map is partially observable, and information can only be obtained along a trajectory by local sensing. Robotic exploration and environmental monitoring has numerous real-world applications and can be formulated using the proposed approach.
  
  \textbf{Keywords}: Calculus of variations, path-dependent Lagrangians, optimization
  
  \textbf{AMS Subject Classification}: 46E15, 46G05, 34K05, 49K21
\end{abstract}



\section{Introduction}
Drawing a map requires exploration. The quality of the map dependents on the amount of information extracted along the explored path. Therefore, we can construct a function that takes a path as an input and outputs the information learned. Then to construct the best possible map, we find the path that extremizes this function. In fact, this approach is commonly applied to robotic exploration and mapping problems~\cite{binney2012branch,GhaffariJadidi2018a,GhaffariJadidi2019,singh2009efficient}.

It is tempting to apply standard calculus of variations to solve this problem where the Lagrangian is the incremental amount of information to be learned. However, there is a fundamental problem with this approach. Each time a location is explored, the amount of information gathered diminishes. Therefore, \textit{the incremental amount of information to be learned depends on the entire previous history rather than merely the current state}. With this observation, the functionals that we will be considering will have the form
\begin{equation}\label{eq:origonal_problem}
	\mathcal{I}_R[\gamma] = 
	\int_0^T \, L(\tau,\gamma(\tau),\dot{\gamma}(\tau)) \, d\tau - G\left(\int_0^T\, \alpha(\tau,\gamma(\tau))\, d\tau \right).
\end{equation}
The first term corresponds to a cost that does depend only on the current state (in our application to exploration, this term corresponds to the risk associated with the path). The non-linearity of the function $G:\mathbb{R}\to\mathbb{R}$ encodes the dependency on the path's history.

To extremize \eqref{eq:origonal_problem}, we use the indirect method and construct first-order necessary optimality conditions. Under sufficient regularity of the data ($L$, $G$, and $\alpha$), the conditions have the following form.
\begin{theorem}[Main Result]\label{thm:main}
	Let $\mathcal{I}_R:C^0([0,T],M)\to\mathbb{R}$ (where $M$ is a smooth, finite-dimensional manifold) have the form \eqref{eq:origonal_problem}. Then, first-order necessary optimility conditions are given by the following second-order ordinary differential equation
	\begin{equation}\label{eq:MEL}
		\frac{d}{dt}\frac{\partial L}{\partial \dot{q}} - \frac{\partial L}{\partial q} = -\mathcal{K}\cdot \frac{\partial\alpha}{\partial q}, \quad
		\mathcal{K} = G'\left( \int_0^T\, \alpha(\tau,\gamma(\tau))\, d\tau \right).
	\end{equation}
\end{theorem}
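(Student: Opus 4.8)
The plan is to compute the first variation of $\mathcal{I}_R$ directly and invoke the fundamental lemma of the calculus of variations. Since the condition \eqref{eq:MEL} is local, it suffices to test stationarity against variations supported in a single coordinate chart; I would therefore fix a chart on $M$, work in coordinates $q=(q^1,\dots,q^n)$, and restrict attention to curves of class $C^1$ (or absolutely continuous), which is the natural domain on which the $\dot\gamma$-dependence of $L$ makes sense, even though the ambient space in the statement is written as $C^0([0,T],M)$. Given a candidate extremal $\gamma$ with fixed endpoints, consider a one-parameter family $\gamma_\varepsilon = \gamma + \varepsilon\eta$ with $\eta$ smooth and compactly supported in $(0,T)$, and set $g(\varepsilon) = \mathcal{I}_R[\gamma_\varepsilon]$; the necessary condition is $g'(0)=0$ for every such $\eta$.

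Next I would differentiate the two terms of \eqref{eq:origonal_problem} separately. For the classical term $\int_0^T L\,d\tau$, differentiation under the integral sign (justified by smoothness of $L$ and compactness of $[0,T]$) together with integration by parts — using $\eta(0)=\eta(T)=0$ — yields the familiar expression $\int_0^T \big(\tfrac{\partial L}{\partial q} - \tfrac{d}{dt}\tfrac{\partial L}{\partial \dot q}\big)\cdot\eta\,d\tau$. For the memory term, write $A[\gamma] := \int_0^T \alpha(\tau,\gamma(\tau))\,d\tau$; under the regularity hypotheses on $\alpha$ this functional is Gateaux differentiable with $\tfrac{d}{d\varepsilon}A[\gamma_\varepsilon]\big|_{0} = \int_0^T \tfrac{\partial\alpha}{\partial q}\cdot\eta\,d\tau$, and the chain rule for the differentiable map $G$ gives $\tfrac{d}{d\varepsilon}G(A[\gamma_\varepsilon])\big|_{0} = G'(A[\gamma])\int_0^T \tfrac{\partial\alpha}{\partial q}\cdot\eta\,d\tau = \mathcal{K}\int_0^T \tfrac{\partial\alpha}{\partial q}\cdot\eta\,d\tau$.

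The decisive observation is that $\mathcal{K}=G'(A[\gamma])$ is a scalar that does not depend on $\tau$: along a fixed candidate extremal the entire history dependence collapses to this one constant. Hence $\mathcal{K}$ may be brought inside the integral, and combining the two contributions gives
\[
g'(0) = \int_0^T \Big( \tfrac{\partial L}{\partial q} - \tfrac{d}{dt}\tfrac{\partial L}{\partial \dot q} - \mathcal{K}\,\tfrac{\partial\alpha}{\partial q} \Big)\cdot \eta \, d\tau = 0
\]
for all admissible $\eta$. The fundamental lemma of the calculus of variations then forces the bracketed covector field to vanish identically on $[0,T]$, which is exactly \eqref{eq:MEL}; since both sides transform as covectors (the Euler--Lagrange operator of $L$ and the differential $\mathcal{K}\,d_q\alpha$), the equation is chart-independent and defines a well-posed second-order ODE on $M$.

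I expect the main obstacle to be technical rather than conceptual: making the differentiation of the nonlocal term $G(A[\gamma])$ rigorous. One must (i) justify differentiating $A$ under the integral sign and interchanging the $\varepsilon$-derivative with the $\tau$-integral, which needs the stated $C^1$-in-$q$ regularity of $\alpha$ with locally integrable bounds; (ii) ensure $G$ is differentiable at the specific value $A[\gamma]$ so that $\mathcal{K}$ is well-defined; and (iii) handle the regularity of the extremal — if one only assumes $\dot\gamma\in C^0$ (or $L^1$), the term $\tfrac{d}{dt}\tfrac{\partial L}{\partial\dot q}$ should first be obtained in integrated (du Bois-Reymond) form before one may legitimately differentiate. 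None of these affect the form of the final conditions, only the precise function space in which they are derived.
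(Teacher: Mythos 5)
Your argument is correct, and it reaches \eqref{eq:MEL} by a genuinely different and more elementary route than the paper's. You differentiate the composite functional $G(A[\gamma])$ directly via the chain rule, at which point the constant $\mathcal{K}=G'(A[\gamma])$ appears immediately and can be pulled inside the integral before applying the fundamental lemma. The paper instead first rewrites $\mathcal{I}_R$ as the time integral of a history-dependent density $\mathcal{L}(t,\gamma_t,\dot\gamma(t))$ (the ``memory Lagrangian'' of \eqref{eq:ML_form}), represents the partial Fr\'{e}chet derivative $\partial\mathcal{L}/\partial\gamma$ as a Riemann--Stieltjes integrator in $NBV$ via Riesz's theorem, swaps the order of integration with a Fubini lemma to obtain the general memory Euler--Lagrange equations \eqref{eq:mem_EL}, and only then specializes: the resulting tail term $\int_t^T \alpha(s,x(s))\,\beta'\bigl(\int_0^s\alpha\,d\tau\bigr)\,ds$ telescopes by the fundamental theorem of calculus to $\beta\bigl(\int_0^T\alpha\bigr)-\beta\bigl(\int_0^t\alpha\bigr)$, which recombines with the local term to produce the single constant $\mathcal{K}$. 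Your route buys brevity and dispenses with all of the $BV$/Riesz machinery of section \ref{sec:functional}; it is essentially the variational analogue of the terminal-cost reformulation that the paper explicitly declines in the introduction. What the paper's detour buys is generality and interpretation: \eqref{eq:mem_EL} holds for arbitrary path-dependent Lagrangians satisfying assumption \ref{ass:regularity}, not only compositions $G\circ A$, and it preserves the incremental (running-cost, submodular) reading of the information functional that motivates the whole framework. Your technical caveats --- Gateaux versus Fr\'{e}chet, differentiating $A$ under the integral sign, and obtaining the momentum equation first in du Bois-Reymond form --- are all apt and consistent with the regularity the paper assumes.
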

The equation \eqref{eq:MEL} is called the \textit{Memory Euler-Lagrange equation} (MEL). An important caveat in this equation is that the constant $\mathcal{K}$ is \textit{unknown}. Although the equations are expressed in closed-form, a shooting problem (or some other technique) is still required to determine the value of $\mathcal{K}$. Some other works related to this subject are \cite{Basin2008,cannarsa2013,Ferialdi_2012,paifelman2019,8623241}. In particular, this work was inspired by \cite{delayproblem}.

The remaining of this paper is organized as follows. Submodular functions are discussed in  section \ref{sec:submodular}. These functions have a diminishing return property which encapsulates the philosophy behind history-dependent Lagrangians for exploration. The formal problem statement (which theorem \ref{thm:main} addresses) is presented in section \ref{sec:problem}. The necessary functional analysis required to prove theorem \ref{thm:main} is reviewed in section \ref{sec:functional}. The main result is proved in section \ref{sec:MEL} where is it compared to the classical Euler-Lagrange equations and it is shown that the principle of optimality fails in our case. A connection to the problem of exploration is shown in section \ref{sec:exploration}. An interpretation of this problem in the Hamiltonian formulation is presented in section \ref{sec:potential}. An analytic example (which can almost be solved in closed-form) is shown in section \ref{sec:analytic} and a numerical example is shown in section \ref{sec:numerical}. Finally, conclusions and future directions are discussed in section \ref{sec:conclusions}.

Optimizing \eqref{eq:origonal_problem} can also be approached using Pontryagin's maximal principle by defining a new state $\dot{y} = \alpha(\tau,\gamma(\tau))$ and imposing a final cost of $G(y(T))$. We will not use this approach as we wish to view $G(y(t))$ as a running cost over each incremental time-step rather than as a final cost. Although optimization problems of the form \eqref{eq:origonal_problem}, our approach (cf. section \ref{sec:MEL}) can handle more exotic Lagrangians which we expect cannot be solved via the maximal principle. This is an object of future work.
\section{Submodular Functions}\label{sec:submodular}
A function is submodular if it has diminishing returns. This section will define this notion and demonstrate how \eqref{eq:origonal_problem} falls under this category when $G$ is nonlinear.

Let $M$ be the environment where we wish to explore. In our case, $M$ will be assumed to be a smooth, finite-dimensional manifold. Let $C_P(M)$ be the set of all continuous paths in $M$:
\begin{equation*}
	\left[ \gamma:[a,b]\to M\right] \in C_P(M).
\end{equation*}
The information gathered is a real-valued function on this path space:
\begin{equation}\label{eq:information}
	\mathcal{I}:C_P(M)\to\mathbb{R},
\end{equation}
where $\mathcal{I}[\gamma]$ is the information gathered from traversing the path $\gamma$. 

We can view $C_P(M)$ as a ``semigroup'' under concatenation (not every element can be combined; they must have common endpoints for their concatenation to be continuous).  Let $\gamma:[0,a]\to M$ and $\zeta:[0,b]\to M$ be two paths. Then their concatenation is $\gamma\frown\zeta:[0,a+b]\to M$ where
\begin{equation}
	\gamma\frown\zeta(t) = \begin{cases}
		\gamma(t), & t\in [0,a] \\
		\zeta(t-a), & t\in [a,a+b].
		\end{cases}
\end{equation}
Concatenation allows us to define submodular functions.
\begin{definition}
The information function $\mathcal{I}:C_P(M)\to\mathbb{R}$ is \textit{submodular} if 
\begin{equation*}
	\mathcal{I}\left[ \gamma\frown\zeta\right] \leq \mathcal{I}[\gamma] + \mathcal{I}[\zeta],
\end{equation*}
where ever it is defined.
\end{definition}
\begin{remark}
	If we use the ``classical'' form of a Lagrangian (whose extremals will be described by the usual Euler-Lagrange equations), which manifests as
	\begin{equation}\label{eq:classical_Lagrangian}
		\mathcal{I}[\gamma] = \int_0^a \, \mathcal{L}(\gamma(t),\dot{\gamma}(t)) \, dt,
	\end{equation}
	then is will have the property that $\mathcal{I}[\gamma\frown\zeta] = \mathcal{I}[\gamma] + \mathcal{I}[\zeta]$, i.e., it will be \textit{modular} and will \textit{not} have the diminishing return property.
\end{remark}
\begin{figure}
    \centering
		\begin{tikzpicture}[scale=0.7]
			\draw[black] plot [smooth cycle] coordinates{ (6.7200,0) (6.4042,1.4097) (5.5333,2.5600) (4.3095,3.2760) (2.9894,3.5194) (1.7968,3.3891) (0.8528,3.0714) (0.1496,2.7600) (-0.4228,2.5791) (-1.0118,2.5393) (-1.7242,2.5430) (-2.5705,2.4349) (-3.4520,2.0770) (-4.1958,1.4137) (-4.6236,0.5029) (-4.6236,-0.5029) (-4.1958,-1.4137) (-3.4520,-2.0770) (-2.5705,-2.4349) (-1.7242,-2.5430) (-1.0118,-2.5393) (-0.4228,-2.5791) (0.1496,-2.7600) (0.8528,-3.0714) (1.7968,-3.3891) (2.9894,-3.5194) (4.3095,-3.2760) (5.5333,-2.5600) (6.4042,-1.4097) };
			\draw[red,very thick] plot [smooth] coordinates{
				(-3,-2) (-2,2) (5,-2) (4.15,0.5) (4,1) (6,0.5)};
			\draw[blue,very thick] plot [smooth] coordinates{
			(-3,-2) (-2,1) (4,3) (2,-2) (5,-2) (4.15,0.5) (4,1) (6,0.5)};
			\begin{scope}
				\clip (3.87,2) rectangle (6,0.5);
				\draw[green,very thick] plot [smooth] coordinates{
					(-3,-2) (-2,2) (5,-2) (4.15,0.5) (4,1) (6,0.5)};
			\end{scope}
			\node at (-0.5,0.5) {\textcolor{red}{\Large $\gamma$}};
			\node at (5,1.5) {\textcolor{green}{\Large $\zeta$}};
			\node at (1.5,2) {\textcolor{blue}{\Large $\xi$}};
			\draw [fill] (6,0.5) circle [radius=0.1];
			\draw [fill] (-3,-2) circle [radius=0.1];
		\end{tikzpicture}
		\caption{The information function, $\mathcal{I}$ is both submodular and path-dependent. Submodular: $\mathcal{I}[\gamma\frown\zeta]\leq \mathcal{I}[\gamma] + \mathcal{I}[\zeta]$. Path-dependent: $\mathcal{I}[\gamma\frown\zeta] - \mathcal{I}[\gamma] \ne \mathcal{I}[\xi\frown\zeta] - \mathcal{I}[\xi]$.}
		\label{fig:path-dependent}
	\end{figure}
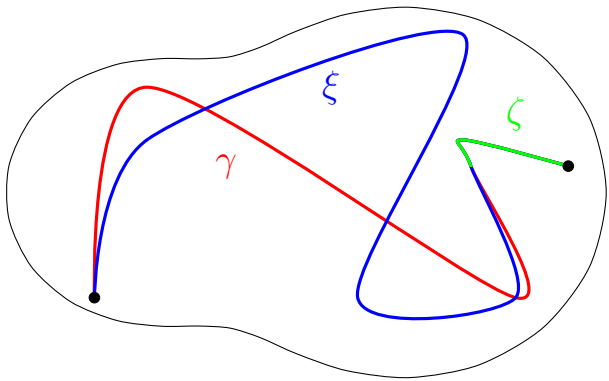
\subsection{``Path integral'' form}
Our primary goal is to utilize a modified version of the calculus of variations to find extrema of \eqref{eq:origonal_problem}. Our first step is to ``differentiate'' $\mathcal{I}$ to make the problem incremental.
\begin{definition}\label{def:diff_path}
	Let $\gamma:[0,a]\to M$ be a continuous path. The function $\mathcal{I}:C_P(M)\to\mathbb{R}$ is differentiable at $\gamma\in C_P(M)$ if, for any two smooth paths $\zeta:[0,b]\to M$ and $\xi:[0,c]\to M$ such that $ \zeta(0) = \xi(0) = \gamma(a)$ and $v = \dot{\zeta}(0) = \dot{\xi}(0)$, we have
	\begin{equation*}
		\lim_{\varepsilon\to 0^+}\, \frac{1}{\varepsilon} \left( \mathcal{I}[\gamma\frown\zeta_\varepsilon] - \mathcal{I}[\gamma]\right) = \lim_{\varepsilon\to 0^+} \, \frac{1}{\varepsilon} \left( \mathcal{I}[\gamma\frown\xi_\varepsilon] - \mathcal{I}[\gamma]\right),
	\end{equation*}
	where $\zeta_\varepsilon:[0,\varepsilon]\to M$ is the restriction (resp. for $\xi_\varepsilon$). The derivative will be denoted as
	\begin{equation}
		\delta\mathcal{I}(\gamma;v) := \lim_{\varepsilon\to 0^+} \, \frac{1}{\varepsilon} \left( \mathcal{I}[\gamma\frown\zeta_\varepsilon] - \mathcal{I}[\gamma]\right).
	\end{equation}
\end{definition}
We will show in the next section that $\mathcal{I}$ in the form given by \eqref{eq:origonal_problem} will be differentiable in the sense of definition \ref{def:diff_path}.

The submodularity of the information can be interpreted in this infinitesimal view-point as:
\begin{equation*}
	\delta\mathcal{I}(\gamma\frown\zeta;v) \leq \delta\mathcal{I}(\zeta;v).
\end{equation*}
A benefit of differentiating the information is that it allows us to write $\mathcal{I}$ reminiscent of \eqref{eq:classical_Lagrangian}:
\begin{equation}
	\mathcal{I}[\gamma] = \int_0^a\, \delta\mathcal{I}(\gamma_s,\dot{\gamma}(s)) \, ds,
\end{equation}
where $\gamma_s:[0,s]\to M$ is the restriction of the path. It is important to stress that $\delta\mathcal{I}(\gamma_s;\dot{\gamma}(s))$ does not only depend on the current state at time $s$, rather the entire path leading up to this instance (recall figure \ref{fig:path-dependent}). Let us call $\mathcal{L}:\mathbb{R}\times C_P(M)\times TM\to \mathbb{R}$ the ``memory Lagrangian'' where
\begin{equation*}
	\mathcal{L}(t,\gamma,v) = \delta\mathcal{I}(\gamma_t;v).
\end{equation*}
This allows us to write our optimization problem as
\begin{equation*}
	\mathcal{I}[\gamma] = \int_0^T \, \mathcal{L}(t,\gamma_t,\dot{\gamma}(t))\, dt.
\end{equation*}
\section{Problem Statement}\label{sec:problem}
As we wish to maximize the information gathered along a path, we want to solve the following maximization problem.
\begin{problem}[Fixed-end point]\label{prob:orig}
	Let $\mathcal{L}:\mathbb{R}\times C_P(M)\times TM\to\mathbb{R}$. Then we want to find
	\begin{equation}\label{eq:problem}
		\arg\max_{\gamma\in C_P(M)} \, \int_0^T \, \mathcal{L}(s,\gamma_s,\dot{\gamma}(s))\, ds,
	\end{equation}
	such that $\gamma(0) = x_0$ and $\gamma(T) = x_f$.
\end{problem}
Our goal is to develop first-order necessary conditions for an extremal (which will be reminiscent of the Euler-Lagrange equations).
\subsection{A particular class of Lagrangians}
Suppose that our (submodular) function has the form
\begin{equation}\label{eq:submodular_form}
	\mathcal{I}[\gamma] = G\left( \int_0^T\, \alpha(\tau,\gamma(\tau))\, d\tau \right).
\end{equation}
This function is submodular precisely when (a) $G$ is concave down and (b) $\alpha:\mathbb{R}\times M\to\mathbb{R}$ is positive (or non-negative). Differentiating this via definition \ref{def:diff_path}, we get that
\begin{equation}\label{eq:partial_Lagrangian}
	\delta\mathcal{I}(\gamma_s;v) = \alpha(s,\gamma(s)) \cdot G'\left( \int_0^s\, \alpha(\tau,\gamma(\tau))\, d\tau \right).
\end{equation}
It is important to point out that \eqref{eq:partial_Lagrangian} does not actually depend on $v$. Comparing this to classical Lagrangians, \eqref{eq:partial_Lagrangian} has the form of a ``potential.'' To make this problem dynamic, we must include a dependence on $v$; in practice this will manifest as a ``risk measure.'' Let $L:\mathbb{R}\times TM\to\mathbb{R}$ be the risk associated to the dynamic state in $TM$ at time $t\in\mathbb{R}$. As we wish to maximize information while minimizing risk, we wish to extremize their \textit{difference}. Putting this together, the memory Lagrangian will have the form
\begin{equation}\label{eq:ML_form}
	\boxed{
	\mathcal{L}(s,\gamma_s,\dot{\gamma}(s)) = L(s,\gamma(s),\dot{\gamma}(s)) - \alpha(s,\gamma(s)) \cdot \beta\left( \int_0^s\, \alpha(\tau,\gamma(\tau)) \, d\tau \right),}
\end{equation}
where $\beta = G'$ and all the data is assumed to be smooth. In particular, this is the memory Lagrangian corresponding to \eqref{eq:origonal_problem}.
\section{Some Functional Analysis}\label{sec:functional}
Taking variations of a path-dependent Lagrangian is a more delicate procedure than in the classical case. This is due to the fact that $\mathcal{L}$ is a function on an infinite-dimensional space which requires tools from functional analysis. This section reviews some results from functional analysis and, in particular, derivatives of functions whose domain is the space of continuous functions. Much of the information covered in this section on functional analysis can be found in the books \cite{debnath2005introduction} and \cite{hunter2001applied}.

In this section, we will assume that $M\subset V$ of some finite-dimensional vector space. This will not be of too much concern because differentiation is local and $M$ can be locally viewed as a subset of some vector space.

Suppose that we have a fixed time interval, $[0,T]\subset\mathbb{R}$, and let $V$ be a finite-dimensional vector space. Then the set of continuous functions
\begin{equation}\label{eq:path_space}
	C^0\left([0,T],V\right) = \left\{ \gamma:[0,T]\to V \right\},
\end{equation}
forms an infinite-dimensional space and is a Banach space with the supremum norm.
\subsection{Banach Spaces}
We first recall the definition of a Banach space.
\begin{definition}
	A Banach space is a normed linear space that is a complete metric space with respect to the metric derived from its norm.
\end{definition}
\begin{example}
	The space $C^0([a,b],\mathbb{R})$ of continuous real-valued functions on the interval $[a,b]$ with the sup-norm forms a Banach space,
	\begin{equation*}
		\lVert f\rVert_\infty = \sup_{x\in[a,b]} \, |f(x)|.
	\end{equation*}
	As a consequence, our space of interest \eqref{eq:path_space} will be a Banach space.
\end{example}
\subsection{Differentiability}
Let $X$ and $Y$ be Banach spaces and $f:X\to Y$ a continuous function.
Extremal values of $f$ will be given by critical points which will require differentiation.
There are multiple definitions for differentiability in Banach spaces; the two that we will focus on are the stronger Fr\'{e}chet and the weaker G\^{a}teaux derivative.
\begin{definition}
	A map $f:X\to Y$ is differentiable at $x\in X$ if there is a bounded linear map $A:X\to Y$ such that
	\begin{equation*}
		\lim_{h\to 0} \, \frac{\lVert f(x+h)-f(x)-Ah\rVert}{\lVert h\rVert}.
	\end{equation*}
\end{definition}
This notion of differentiability is called the \textit{Fr\'{e}chet derivative}. In practice, to solve \eqref{eq:problem}, we will work with the directional, or G\^{a}teaux, derivative (see definition \ref{def:gateaux} below).

If the map $f:X\to Y$ is Fr\'{e}chet differentiable, then $f'(x)=A:X\to Y$ is a linear and bounded map. We can think of the derivative as a map
\begin{equation*}
	f':X\to\mathcal{B}(X,Y),
\end{equation*}
where $\mathcal{B}(X,Y)$ is the space of all bounded linear functions. When the codomain is the real numbers (as is the case in \eqref{eq:information}), the derivative has the form
\begin{equation*}
	f':X\to\mathcal{B}(X,\mathbb{R}) = X^*,
\end{equation*}
where $X^*$ is the dual space.
\begin{remark}
	This is reminiscent of the exterior derivative on manifolds in the following way: let $f:M\to\mathbb{R}$ be a smooth function, then its derivative $df:M\to T^*M$ consists of co-vectors. A (Riemannian) metric on $M$ induces an isomorphism $T^*M\cong TM$ which transforms the differential into the gradient. The map $\mathcal{R}$ in \eqref{eq:dual_nbv} will accomplish a similar feat.
\end{remark}
We end our discussion of differentiability with the idea of the directional, or G\^{a}teaux, derivative.
\begin{definition}\label{def:gateaux}
	Let $X$ and $Y$ be Banach spaces and $f:X\to Y$. The directional derivative of $f$ at $x\in X$ in the direction $h\in X$ is given by
	\begin{equation*}
		\delta f(x;h) = \lim_{t\to 0} \frac{f(x+th)-f(x)}{t}.
	\end{equation*}
	If this limit exists for every $h\in X$, and $f'_G(x):X\to Y$ defined by $f'_G(x)h = \delta f(x;h)$ is a linear map, then we say that $f$ is G\^{a}teaux differentiable at $x$ and $f'_G$ the G\^{a}teaux derivative.
\end{definition}
Another way of writing the G\^{a}teaux derivative is
\begin{equation}\label{eq:directional_derivative}
	\delta f(x;h) = \left.\frac{d}{dt}\right|_{t=0} \, f(x+th).
\end{equation}
\begin{remark}
	If $f$ is Fr\'{e}chet differentiable, it is also G\^{a}teaux differentiable and the Fr\'{e}chet derivative, $f'(x)$, is given by
	\begin{equation*}
		f'(x)h = \delta f(x;h).
	\end{equation*}
	That is, $f'(x) = f'_G(x)$. However, the converse is \textit{not} generally true. G\^{a}teaux differentiability does not imply Fr\'{e}chet differentiability. As such, when dealing with differentiability, we will assume Fr\'{e}chet but will commonly compute via \eqref{eq:directional_derivative}.
\end{remark}
\subsection{Riemann-Stieltjes integration}
Our information functions has a form similar to $f:C^0([a,b],\mathbb{R})\to\mathbb{R}$ and thus its derivative has values in $C^0([a,b],\mathbb{R})^*$. This dual space has close connections to Riemann-Stieltjes integration.
The contents below are extracted, mostly, from \S4.4 in \cite{kreyszig1978introductory}.
\subsubsection{Bounded Variation}
A function $w:[a,b]\to\mathbb{R}$ is said to be of bounded variation if its total variation is finite where
\begin{equation*}
	V_a^b(w) = \sup_{P\in\mathcal{P}} \, \sum_{i=0}^{n_P-1} \, \left| w(t_{i+1}) - w(t_i) \right|,
\end{equation*}
and the supremum runs over all partitions of the interval,
$P = \left\{ a= t_0,\ldots, t_{n_P}=b \right\}$. Call the vector space of all functions with bounded variation by $BV([a,b],\mathbb{R})$. A norm on this space is $\lVert w \rVert_{BV} = \left|w(a)\right| + V_a^b(w)$.

\subsubsection{The Riemann-Stieltjes Integral}
Consider a continuous function $x\in C^0([a,b],\mathbb{R})$ and a function of bounded variation $w\in BV([a,b],\mathbb{R})$. For a partition, $P = \left\{t_0,\ldots,t_n\right\}$, let $\eta(P)$ be the largest interval:
\begin{equation*}
	\eta(P) = \max \left\{ t_1-t_0, \ldots, t_{n_P}-t_{n_P-1} \right\}.
\end{equation*}
For each partition, consider the finite sum
\begin{equation*}
	s(P) = \sum_{i=1}^{n} \, x(t_i) \left[ w(t_i)-w(t_{i-1}) \right].
\end{equation*}
If for every $\varepsilon>0$, there exists $\delta>0$ such that if
	$\eta({P}) < \delta$ then $\left| \mathcal{J} - s(P)\right|<\varepsilon$,
then $\mathcal{J}$ is the Riemann-Stieltjes integral and is written via
\begin{equation}
	\mathcal{J} = \int_a^b \, x(t) \, dw(t).
\end{equation}
The Riemann-Stieltjes integral allows us to associate the dual space to continuous functions with functions of bounded variation.

\begin{theorem}[Riesz's theorem]\label{th:riesz}
	Every bounded linear functional can be represented by a Riemann-Stieltjes integral:
	\begin{equation}
		f\in C^0([a,b],\mathbb{R})^* \implies f(x) = \int_a^b \, x(t) \, dw(t).
	\end{equation}
\end{theorem}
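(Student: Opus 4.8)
The plan is to follow the classical route for representing $C^0([a,b],\mathbb{R})^*$: extend $f$ off the continuous functions by Hahn--Banach, read off a candidate integrator $w$ by testing the extension against step functions, and then verify separately that $w$ has bounded variation and that it reproduces $f$ through a Riemann--Stieltjes integral. First I would embed $C^0([a,b],\mathbb{R})$ isometrically into the Banach space $B([a,b],\mathbb{R})$ of all bounded functions on $[a,b]$ with the supremum norm, and invoke the Hahn--Banach theorem to extend the bounded functional $f$ to $\tilde f\in B([a,b],\mathbb{R})^*$ with $\lVert \tilde f\rVert = \lVert f\rVert$. For $t\in(a,b]$ let $u_t$ be the characteristic function of the subinterval $[a,t]$, and set $u_a\equiv 0$; these lie in $B([a,b],\mathbb{R})$ but not in $C^0$. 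Define the candidate by $w(a)=0$ and $w(t)=\tilde f(u_t)$ for $t\in(a,b]$.

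Next I would show $w\in BV([a,b],\mathbb{R})$ with $V_a^b(w)\le\lVert f\rVert$. Given a partition $P=\{a=t_0<\dots<t_n=b\}$, put $\varepsilon_i=\operatorname{sgn}\big(w(t_i)-w(t_{i-1})\big)$ and form the step function $\sigma=\sum_{i=1}^n \varepsilon_i\,(u_{t_i}-u_{t_{i-1}})$. Since the functions $u_{t_i}-u_{t_{i-1}}$ have disjoint supports and $\varepsilon_i\in\{-1,0,1\}$, one has $\lVert\sigma\rVert_\infty\le 1$, whence by linearity $\sum_{i=1}^n|w(t_i)-w(t_{i-1})| = \tilde f(\sigma)\le\lVert\tilde f\rVert=\lVert f\rVert$. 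Taking the supremum over partitions gives the claim, so $\lVert w\rVert_{BV}=|w(a)|+V_a^b(w)$ is finite.

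The remaining, and most delicate, step is to prove $f(x)=\int_a^b x(t)\,dw(t)$ for every $x\in C^0([a,b],\mathbb{R})$. Fix such an $x$ and a partition $P$, and consider the step function $x_P=\sum_{i=1}^n x(t_i)\,(u_{t_i}-u_{t_{i-1}})$, which equals $x(t_i)$ on $(t_{i-1},t_i]$. By linearity $\tilde f(x_P)=\sum_{i=1}^n x(t_i)\big(w(t_i)-w(t_{i-1})\big)=s(P)$, exactly the Riemann--Stieltjes sum. On the other hand, uniform continuity of $x$ yields $\lVert x-x_P\rVert_\infty\le\omega_x(\eta(P))$, where $\omega_x$ is the modulus of continuity of $x$; hence, using boundedness of $\tilde f$ and $\tilde f|_{C^0}=f$, $\tilde f(x_P)\to\tilde f(x)=f(x)$ as $\eta(P)\to 0$. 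Comparing the two computations shows at once that the limit defining $\int_a^b x\,dw$ exists and equals $f(x)$. I expect the main obstacle to be precisely this passage to the limit: one must control $s(P)$ uniformly in $P$ through the \emph{sup-norm} approximation $x_P\to x$ rather than merely pointwise, which is what makes continuity of $x$ (not just boundedness) indispensable and forces the detour through $B([a,b],\mathbb{R})$ and Hahn--Banach. Finally I would remark that the $w$ produced this way is not unique — adding a constant to $w$ leaves the integral against continuous functions unchanged, and uniqueness is recovered only after a normalization such as $w(a)=0$ together with right-continuity on $(a,b)$ — but Theorem \ref{th:riesz} asserts existence only.
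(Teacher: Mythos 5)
Your proof is correct and is precisely the classical Hahn--Banach argument of Kreyszig's \S 4.4, which is the source the paper cites for this result (the paper itself offers no proof beyond that citation). All the key steps — the extension to $B([a,b],\mathbb{R})$, the integrator $w(t)=\tilde f(u_t)$, the total-variation bound via sign-weighted step functions, and the uniform step-function approximation $x_P\to x$ giving $s(P)\to f(x)$ — are carried out correctly.
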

Theorem \ref{th:riesz} shows that there is an intimate relationship between $C^0([a,b],\mathbb{R})^*$ and $BV([a,b],\mathbb{R})$. However, the relationship is not unique; if $\tilde{w} = w + c$ where $c$ is a constant, they induce the same integral. Therefore, we define the set of \textit{normalized} bounded variation functions,
\begin{equation*}
	NBV([a,b],\mathbb{R}) = \left\{ g\in BV([a,b],\mathbb{R}) : g(a)=0,~ 
	\lim_{x\to c^-}f(x) = f(c) \right\}.
\end{equation*}
With this normalization, there exists a 1-1 correspondence
\begin{equation*}
	C^0([a,b],\mathbb{R})^* \longleftrightarrow NBV([a,b],\mathbb{R}).
\end{equation*}
Let $\mathcal{R}:C^0([a,b],\mathbb{R})^*\to NBV([a,b],\mathbb{R})$ be this association, i.e.
\begin{equation}\label{eq:dual_nbv}
	f(x) = \int_a^b \, x(t) \, d\left[ \mathcal{R}(f)\right](t).
\end{equation}
It is shown in \cite{delayproblem} that $\mathcal{R}$ is a linear topological isomorphism.
\subsection{Higher Dimensions}
Everything discussed so far has been for the space $C^0([a,b],\mathbb{R})$, i.e. the codomain is a 1-dimensional vector space. Let $V$ be a finite-dimensional vector space (e.g. $V=\mathbb{R}^n$). Everything previously discussed still works for this case. Let $\lVert\cdot\rVert_V$ be a norm on $V$, then for a function $w:[a,b]\to V$,
\begin{equation*}
	V_a^b(w) = \sup_{P\in\mathcal{P}} \, \sum_{i=0}^{n_P-1} \, \lVert w(t_{i+1})-w(t_{i}) \rVert_V.
\end{equation*}
Moreover, the Riemann-Stieltjes integral follows as usual where $w\in BV([a,b],V^*)$,
\begin{equation*}
	s\left( P \right) = \sum_{i=1}^{n} \langle x(t_i), \left[ w(t_i)-w(t_{i-1}) \right] \rangle.
\end{equation*}
Riesz's theorem in this context states that
\begin{equation*}
	C^0([a,b],V)^* \longleftrightarrow NBV([a,b],V^*).
\end{equation*}
Therefore, for a Fr\'{e}chet differentiable function $f:C^0([a,b],V)\to\mathbb{R}$, its derivative can be represented as a normalized function of bounded variation, $w:[a,b]\to V^*$. 

We will call this isomorphism $\mathcal{R}_V:C^0([a,b],V)^*\to NBV([a,b],V^*)$.
\section{The Memory Euler-Lagrange Equations}\label{sec:MEL}
We now return to the problem of finding solutions to \eqref{eq:problem}. Recall that the memory Lagrangian has the form $\mathcal{L}:[0,T]\times C^0([0,T],M) \times TM\to\mathbb{R}$. To solve this, we make three regularity assumptions on $\mathcal{L}$, \cite{delayproblem}.
\begin{assumption}\label{ass:regularity}
	We make the following three regularity assumptions.
	\begin{itemize}
		\item[(A.1)] $\mathcal{L}$ is continuous, i.e. $\mathcal{L}\in C^0\left( [0,T]\times C^0([0,T],M)\times TM,\mathbb{R}\right)$. 
		\item[(A.2)] For all $(t,\gamma,v)\in [0,T]\times C^0([0,T],M)\times TM$, the partial Fr\'{e}chet derivative exists with respect to the second (path) variable and is continuous.
		\item[(A.3)] For all $(t,\gamma,v)\in [0,T]\times C^0([0,T],M)\times TM$, the partial Fr\'{e}chet derivative exists with respect to the third (vector) variable and is continuous.
	\end{itemize}
\end{assumption}
\begin{remark}
	This assumption will not be restrictive as Lagrangians of the form \eqref{eq:ML_form} satisfy assumption \ref{ass:regularity}.
\end{remark}
To find critical paths of $\mathcal{I}$, we take variations. Let $h$ be a variation along the path $\gamma$, i.e. $h(t) \in T_{\gamma(t)}M$. Then,
\begin{equation*}
	\begin{split}
	\delta\int_0^T\, \mathcal{L}(s,\gamma_s,\dot{\gamma}(s))\, ds &= 
	\int_0^T\, \left[ \frac{\partial\mathcal{L}}{\partial \gamma}\cdot h_s + \frac{\partial \mathcal{L}}{\partial v} \cdot \dot{h}(s) \right] \, ds \\
	&= \int_0^T \, \left[ \int_0^s\, h(\theta)\cdot d\left[ \mathcal{R}_V\left(\frac{\partial \mathcal{L}}{\partial \gamma}\right) \right](\theta) - \frac{d}{dt}\frac{\partial\mathcal{L}}{\partial v}\cdot h(s) \right] \, ds,
	\end{split}
\end{equation*}
where we used that $h(0)=0$ and $h(T)=0$ (as problem \ref{prob:orig} has fixed end points). An issue we are faced with is how to extract $h(s)$ out of \textit{both} terms. The following lemma and proposition demonstrate how to accomplish this.
\begin{lemma}
	Let $h:[0,T]\to\mathbb{R}$ and $k:[0,T]\times [0,T]\to\mathbb{R}$ be continuous. Then
	\begin{equation*}
		\int_0^T\, \int_0^t \, h(\theta)\cdot k(t,\theta)\, d\theta \, dt = 
		\int_0^T \, h(t) \cdot \int_t^T\, k(s,t) \, ds \, dt.
	\end{equation*}
\end{lemma}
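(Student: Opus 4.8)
The plan is to recognize this identity as nothing more than Fubini's theorem applied over a triangular domain, with the only real work being careful bookkeeping of the dummy variables. First I would observe that since $h$ and $k$ are continuous on the compact sets $[0,T]$ and $[0,T]^2$ respectively, the function $(t,\theta)\mapsto h(\theta)\,k(t,\theta)$ is continuous, hence bounded and Lebesgue integrable, on the closed triangle $\Delta=\{(t,\theta):0\le\theta\le t\le T\}$. This guarantees that the hypotheses of Fubini's theorem are met, so the double integral over $\Delta$ equals each of its iterated versions.

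Next I would rewrite the left-hand side by extending the inner integral to the full interval with an indicator function,
\begin{equation*}
	\int_0^T\!\!\int_0^t h(\theta)\,k(t,\theta)\,d\theta\,dt = \int_0^T\!\!\int_0^T \mathbf{1}_{\{\theta\le t\}}\,h(\theta)\,k(t,\theta)\,d\theta\,dt,
\end{equation*}
and then apply Fubini's theorem to exchange the order of integration over the square $[0,T]^2$:
\begin{equation*}
	= \int_0^T\!\!\int_0^T \mathbf{1}_{\{\theta\le t\}}\,h(\theta)\,k(t,\theta)\,dt\,d\theta = \int_0^T h(\theta) \int_\theta^T k(t,\theta)\,dt\,d\theta,
\end{equation*}
where in the last step I used $\{t:\theta\le t\le T\}=[\theta,T]$ and pulled $h(\theta)$ out of the $t$-integral.

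Finally I would relabel the dummy variables: writing $t$ for the outer variable (currently $\theta$) and $s$ for the inner variable (currently $t$) turns the right-hand expression into $\int_0^T h(t)\int_t^T k(s,t)\,ds\,dt$, which is precisely the claimed right-hand side. The only point requiring care is this relabeling — it is easy to conflate the two roles played by "$t$" in the target formula — but there is no genuine analytic obstacle, since continuity on the compact triangle makes the invocation of Fubini completely routine. If one preferred to avoid Fubini altogether, the identity could instead be proved directly by writing both sides as limits of Riemann sums over a common grid adapted to $\Delta$ and matching terms, but the indicator-function argument above is the cleanest route.
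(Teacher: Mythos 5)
Your proof is correct and follows exactly the route the paper takes: the paper's entire proof is the one-line remark that the identity follows from Fubini's theorem, and your argument simply fills in the routine details (continuity on the compact triangle, the indicator-function rewriting, and the relabeling of dummy variables). Nothing is missing and nothing diverges from the paper's approach.
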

\begin{proof}
	This follows from Fubini's theorem.
\end{proof}
\begin{proposition}
	Let $g:D\to T^*M$ where $D = \{(t,\theta): 0\leq\theta\leq t \leq T \}$ and
	\begin{equation*}
		\frac{\partial \mathcal{L}}{\partial \gamma}(t,\gamma_t,v)\cdot h_t = \int_0^t \, h_t(\theta) \cdot dg_t(\theta), \quad g_t(\theta) = g(t,\theta).
	\end{equation*}
	Then, assuming that $g$ is differentiable with respect to $\theta$, we have
	\begin{equation}\label{eq:EL_fubini}
		\int_0^T\, \frac{\partial\mathcal{L}}{\partial\gamma}(t,\gamma_t,v) \cdot h_t \, dt = 
		\int_0^T \, \left( \int_t^T \, \frac{\partial}{\partial\theta}g(s,t) \right) \cdot h(t) \, dt.
	\end{equation}
\end{proposition}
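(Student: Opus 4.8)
The plan is to reduce this statement to the preceding lemma on swapping the order of iterated integrals. The left-hand side is $\int_0^T \left(\frac{\partial\mathcal{L}}{\partial\gamma}(t,\gamma_t,v)\cdot h_t\right)\,dt$, and by hypothesis the inner pairing is itself a Riemann--Stieltjes integral, $\int_0^t h(\theta)\cdot dg_t(\theta)$. The first step is therefore to convert this Riemann--Stieltjes integral into an ordinary Riemann integral. Since $g$ is assumed differentiable in its second slot $\theta$, we have $dg_t(\theta) = \frac{\partial}{\partial\theta}g(t,\theta)\,d\theta$, so that
\begin{equation*}
	\frac{\partial\mathcal{L}}{\partial\gamma}(t,\gamma_t,v)\cdot h_t = \int_0^t h(\theta)\cdot \frac{\partial}{\partial\theta}g(t,\theta)\,d\theta.
\end{equation*}
I would note that the normalization built into the $NBV$ representation (left-continuity, $g_t(0)=0$) is what guarantees this identification is valid with no boundary correction terms at $\theta = 0$.

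Next I would substitute this back into the outer integral, obtaining a genuine iterated integral over the triangular region $D = \{(t,\theta): 0\le\theta\le t\le T\}$:
\begin{equation*}
	\int_0^T \frac{\partial\mathcal{L}}{\partial\gamma}(t,\gamma_t,v)\cdot h_t\,dt = \int_0^T\int_0^t h(\theta)\cdot\frac{\partial}{\partial\theta}g(t,\theta)\,d\theta\,dt.
\end{equation*}
This is now exactly in the form to which the lemma applies, with the (vector-valued) kernel $k(t,\theta) = \frac{\partial}{\partial\theta}g(t,\theta)$; the lemma's continuity hypothesis is met since $g$ is $C^1$ in $\theta$ with continuous derivative, and $h$ is continuous. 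Applying the lemma swaps the region of integration from ``$\theta$ runs up to $t$'' to ``$s$ runs from $t$ to $T$'':
\begin{equation*}
	\int_0^T\int_0^t h(\theta)\cdot k(t,\theta)\,d\theta\,dt = \int_0^T h(t)\cdot\left(\int_t^T k(s,t)\,ds\right)dt = \int_0^T\left(\int_t^T \frac{\partial}{\partial\theta}g(s,t)\,ds\right)\cdot h(t)\,dt,
\end{equation*}
which is precisely \eqref{eq:EL_fubini}.

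The main obstacle, and the only place requiring genuine care rather than bookkeeping, is the passage from the Riemann--Stieltjes integral to the Riemann integral: one must justify that the $NBV$ representative $g_t$ of the functional $\frac{\partial\mathcal{L}}{\partial\gamma}(t,\gamma_t,v)$ is not merely of bounded variation but absolutely continuous in $\theta$, so that $dg_t(\theta)$ genuinely equals $\frac{\partial}{\partial\theta}g(t,\theta)\,d\theta$ as measures; the hypothesis that ``$g$ is differentiable with respect to $\theta$'' is doing exactly this work, and I would state it as carrying the implicit assumption that $\partial g/\partial\theta$ is continuous (hence $g_t$ is $C^1$, a fortiori absolutely continuous) so that the fundamental theorem of calculus for the Stieltjes integral applies. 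A secondary, minor point is the vector-valued nature of $g$: the lemma as stated is scalar, but it extends componentwise without change since Fubini applies coordinatewise and the pairing $\cdot$ is bilinear, so invoking the lemma ``componentwise'' suffices. Everything else is a direct substitution.
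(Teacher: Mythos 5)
Your proof is correct and is exactly the argument the paper intends: the lemma immediately preceding the proposition is stated precisely so that, after using differentiability of $g$ in $\theta$ to rewrite the Riemann--Stieltjes integral as $\int_0^t h(\theta)\cdot \frac{\partial}{\partial\theta}g(t,\theta)\,d\theta$, one applies it with $k(t,\theta)=\frac{\partial}{\partial\theta}g(t,\theta)$ and Fubini does the rest. Your added remarks on absolute continuity of the $NBV$ representative and the componentwise extension to vector-valued $g$ are appropriate refinements of the same route, not a different one.
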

For more details on when $g$ is not differentiable, see \cite{delayproblem}. In particular, when $g$ is not differentiable, the right hand side of \eqref{eq:EL_fubini} can be rewritten in integral form as
\begin{equation*}
	\int_0^T\, \frac{\partial\mathcal{L}}{\partial\gamma} (t,\gamma_t,v)\cdot h_t \, dt = 
	\int_0^T \, \left( g(t,t) + \frac{d}{dt}\int_t^T \, g(s,t)\, ds \right) \cdot h(t) \, dt.
\end{equation*}
We can now combine the above to produce the general memory Euler-Lagrange equations.
\begin{theorem}[General memory Euler-Lagrange equations]
	First-order necessary conditions for an extremal of \eqref{eq:problem} are
	\begin{equation}\label{eq:mem_EL}
		\boxed{
		\frac{d}{dt}\frac{\partial\mathcal{L}}{\partial v} = g(t,t) + \frac{d}{dt}\int_t^T \, g(s,t)\, ds, \quad g(t,\theta) = \mathcal{R}_V\left[ \frac{\partial\mathcal{L}}{\partial\gamma}(t,\gamma_t,v) \right] (\theta).}
	\end{equation}
\end{theorem}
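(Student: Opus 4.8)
The plan is to assemble the identity \eqref{eq:mem_EL} directly from the variational computation that immediately precedes the theorem, using the lemma and the proposition to handle the two terms. Starting from the first-order variation
\[
\delta\int_0^T \mathcal{L}(s,\gamma_s,\dot{\gamma}(s))\,ds = \int_0^T\left[\frac{\partial\mathcal{L}}{\partial\gamma}\cdot h_s + \frac{\partial\mathcal{L}}{\partial v}\cdot\dot{h}(s)\right]ds,
\]
I would treat the two summands separately. For the velocity term, the usual integration by parts (valid because $h(0)=h(T)=0$) converts $\int_0^T \frac{\partial\mathcal{L}}{\partial v}\cdot\dot{h}(s)\,ds$ into $-\int_0^T \frac{d}{dt}\frac{\partial\mathcal{L}}{\partial v}\cdot h(s)\,ds$. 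For the path term, I would invoke assumption (A.2): the partial Fréchet derivative $\frac{\partial\mathcal{L}}{\partial\gamma}(t,\gamma_t,v)$ is a bounded linear functional on $C^0([0,t],V)$, so by Riesz's theorem \ref{th:riesz} together with the isomorphism $\mathcal{R}_V$ it is represented by a normalized bounded-variation function $g(t,\cdot) = \mathcal{R}_V[\partial\mathcal{L}/\partial\gamma(t,\gamma_t,v)]$ on $[0,t]$, giving $\frac{\partial\mathcal{L}}{\partial\gamma}\cdot h_t = \int_0^t h(\theta)\cdot dg_t(\theta)$. Substituting and applying the proposition (equivalently, the lemma with $k$ replaced by the Riemann--Stieltjes measure $dg$) moves $h$ outside the inner integral and yields
\[
\int_0^T\frac{\partial\mathcal{L}}{\partial\gamma}\cdot h_t\,dt = \int_0^T\left(g(t,t) + \frac{d}{dt}\int_t^T g(s,t)\,ds\right)\cdot h(t)\,dt.
\]

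Combining the two pieces, the total variation equals $\int_0^T\big[g(t,t) + \frac{d}{dt}\int_t^T g(s,t)\,ds - \frac{d}{dt}\frac{\partial\mathcal{L}}{\partial v}\big]\cdot h(t)\,dt$. Since $\gamma$ is a critical path, this vanishes for every admissible variation $h$ with $h(t)\in T_{\gamma(t)}M$ and $h(0)=h(T)=0$; a standard fundamental-lemma-of-the-calculus-of-variations argument (the admissible $h$ are dense enough to force the bracketed continuous expression to be zero pointwise) then gives \eqref{eq:mem_EL}. Because differentiation is local, I would phrase this last step in a coordinate chart, where $M$ sits inside $V$ and the admissible variations are genuinely arbitrary compactly-supported test functions, so the classical fundamental lemma applies verbatim.

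The main obstacle is justifying the manipulations around the bounded-variation representation: one must check that $g(t,\theta)$ really depends measurably/continuously enough on the upper argument $t$ for the Fubini-type swap in the lemma and proposition to be legitimate when the integrand is a Riemann--Stieltjes measure rather than an honest function, and one must make sense of $\frac{d}{dt}\int_t^T g(s,t)\,ds$ (which is where the non-differentiable-in-$\theta$ case of the proposition is hiding). I would lean on the cited companion paper \cite{delayproblem} for the precise hypotheses under which $\mathcal{R}_V$ interacts well with the iterated integral, and on assumption (A.1)--(A.3) for the continuity needed to pass limits through. A secondary, more cosmetic, point is the passage between the "differentiable $g$" form $\int_t^T \frac{\partial}{\partial\theta}g(s,t)\,ds$ and the general form displayed in \eqref{eq:mem_EL}; this is just an integration by parts in $\theta$ combined with the normalization $g(s,0)=0$, and I would state it as such without dwelling on it.
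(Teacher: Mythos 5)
Your proposal reconstructs exactly the argument the paper intends: integrate the velocity term by parts using $h(0)=h(T)=0$, represent $\partial\mathcal{L}/\partial\gamma$ via $\mathcal{R}_V$ as a Riemann--Stieltjes integral, swap the order of integration via the Fubini lemma/proposition, and invoke the fundamental lemma of the calculus of variations, deferring the delicate regularity of $g$ to \cite{delayproblem} just as the paper does. The only quibble is that the equivalence of the two right-hand-side forms is a Leibniz-rule computation ($\frac{d}{dt}\int_t^T g(s,t)\,ds = -g(t,t)+\int_t^T\frac{\partial}{\partial\theta}g(s,t)\,ds$) rather than an integration by parts, but this is cosmetic and does not affect correctness.
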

\begin{remark}
	For the function $g(t,\theta)$, $t$ represents the current length of the path segment while $\theta\in [0,t]$ is the location of the perturbation along this segment. Additionally, the ``potential'' term in \eqref{eq:mem_EL} involves the tail of the trajectory (it depends on the future of the path). This breaks causality.
\end{remark}
\subsection{Comparison to the ``classical'' E-L equations}
The equation \eqref{eq:mem_EL} is a generalization of the usual Euler-Lagrange equations. Therefore, when the Lagrangian no longer depends on the past, the general memory Euler-Lagrange equations should reduce the the classical Euler-Lagrange equations.
Suppose that $\mathcal{L}$ only depends on the current position rather than the entire history. Then we can write it as
\begin{equation*}
	\mathcal{L}(t,\gamma_t,\dot{\gamma}(t)) = L(t,\gamma(t),\dot{\gamma}(t))
\end{equation*}
for $L:\mathbb{R}\times M\times V\to \mathbb{R}$. In this situation, the (density of the) derivative with respect to $\gamma$ is
\begin{equation*}
	\frac{\partial\mathcal{L}}{\partial\gamma}(t,\gamma_t,\dot{\gamma}(t))\cdot h_t = 
	\frac{\partial L}{\partial{q}}\left(t,\gamma(t),\dot{\gamma}(t)\right)\cdot h(t).
\end{equation*}
Referring to the right hand side of \eqref{eq:EL_fubini} in a ``distributional sense,'' we have
\begin{equation*}
	\frac{\partial}{\partial\theta}g(t,\theta) = \frac{\partial L}{\partial{q}}\cdot \delta(t-\theta),
\end{equation*}
which provides us with
\begin{equation*}
	\frac{d}{dt}\frac{\partial L}{\partial\dot{q}} = \frac{\partial L}{\partial q},\quad \left(\frac{\partial\mathcal{L}}{\partial\dot{q}} = \frac{\partial L}{\partial\dot{q}}\right)
\end{equation*}
which is precisely the classical Euler-Lagrange equation.
\subsection{The MEL equations for our class of Lagrangians}\label{sec:ourMEL}
For an arbitrary path-dependent Lagrangian, it is not at all straight-forward to compute the function $g(t,\theta)$. As such, we will compute this for the general class of Lagrangians given by \eqref{eq:ML_form}. 

We will start with the path variation computation (and ignoring the $L$ term as this will only contribute a classical Euler-Lagrange term). For simplicity of calculations, let us again assume that $M\subset V$ is a subset of a finite-dimensional vector space. Let $k:[0,T]\to V$ be a variation. Then we have
\begin{equation*}
	\begin{split}
		\frac{\partial\mathcal{L}}{\partial\gamma}\cdot k_t &= \left.\frac{d}{d\tau}\right|_{\tau=0} \,-\alpha(t,x(t) + \tau\cdot k(t)) \cdot \beta\left( \int_0^t \, \alpha(s,x(s)+\tau\cdot k(s)) \, ds \right) \\
		&= -\left\langle\frac{\partial\alpha}{\partial x}(t,x(t)), k(t)\right\rangle \cdot \beta\left(\int_0^t \, \alpha(s,x(s)) \, ds \right)\\
		&\quad - \alpha(t,x(t)) \cdot \int_0^t \, \left\langle \frac{\partial\alpha}{\partial x}(s,x(s)), k(s)\right\rangle\, ds \cdot \beta'\left( \int_0^t \, \alpha(s,x(s))\, ds \right).
	\end{split}
\end{equation*}
Extracting out the ``density,'' we have (viewing this in a distributional sense)
\begin{equation*}
	\begin{split}
		dg_t(s) &= -\frac{\partial\alpha}{\partial x}(t,x(t)) \cdot \beta\left(\int_0^t \, \alpha(\tau,x(\tau))\, d\tau \right) \cdot \delta(t-s) \\
		&\quad - \alpha(t,x(t)) \cdot \frac{\partial\alpha}{\partial x}(s,x(s)) \cdot \beta'\left(\int_0^t \, \alpha(\tau,x(\tau)) \, d\tau \right).
	\end{split}
\end{equation*}
The memory Euler-Lagrange equations are then
\begin{equation*}
	\begin{split}
		\frac{d}{dt}\frac{\partial L}{\partial v} - \frac{\partial L}{\partial x} &=
		-\frac{\partial\alpha}{\partial x} (t,x(t)) \cdot \beta\left(\int_0^t \, \alpha(\tau,x(\tau)) \, d\tau \right) \\
		&\quad - \int_t^T \, \alpha(s,x(s))\cdot \frac{\partial\alpha}{\partial x}(t,x(t)) \cdot \beta' \left(\int_0^s \, \alpha(\tau,x(\tau))\, d\tau \right) \, ds
	\end{split}
\end{equation*}
The integral over $[t,T]$ above can actually be integrated. This yields
\begin{equation}\label{eq:noncausal_MEL}
	\begin{split}
		\frac{d}{dt}\frac{\partial L}{\partial v} - \frac{\partial L}{\partial x} &=
		-\frac{\partial\alpha}{\partial x}(t,x(t))\cdot \beta\left(\int_0^T\, \alpha(\tau,x(\tau))\, d\tau \right).
	\end{split}
\end{equation}
Notice that although \eqref{eq:noncausal_MEL} depends on future information, it enters via a constant:
\begin{equation}\label{eq:MEL!}
	\boxed{
		\frac{d}{dt}\frac{\partial L}{\partial v} - \frac{\partial L}{\partial x} = 
		-\mathcal{K}\cdot \frac{\partial\alpha_t}{\partial x} }
\end{equation}
where $\mathcal{K}$ is a constant given by
\begin{equation*}
	\mathcal{K} := \beta\left(\int_0^T\, \alpha(\tau,x(\tau))\, d\tau \right).
\end{equation*}
Formally, the constant in \eqref{eq:MEL!} depends on the entire trajectory. However, we can treat it as an unknown parameter to be maximized over. This can be accomplished in the following five step program:
\begin{enumerate}
	\item Prescribe fixed boundary conditions: $x(0)$, $x(T)$, and $T$.
	\item For a given $\mathcal{K}$, solve the shooting problem \eqref{eq:MEL!}.
	\item Along the trajectory found by shooting, calculate $\mathcal{I}_R[\gamma]$ in \eqref{eq:origonal_problem}.
	\item Construct a function, $\mathcal{M}:[\min \beta,\max\beta]\to\mathbb{R}$, given by $\mathcal{K}\mapsto \mathcal{I}_R[\gamma]$.
	\item Perform optimization over the set $[\min\beta,\max\beta]$.
\end{enumerate}
\begin{remark}
	If we choose $\beta$ such that it is bounded from above and below, the optimization to determine $\mathcal{K}$ is done over a compact interval, which allows for compact optimization.
\end{remark}
\subsection{Remark on the principle of optimality}
We note that the principle of optimality \textbf{does not hold} (unless $\beta$ is constant or $\alpha=0$ in which case $G$ is linear). Let $x^*(t)$ be an optimal trajectory and choose $t^*\in (0,T)$. Reconsider the optimization problem on the interval $[t^*,T]$ with the boundary conditions $x(t^*)=x^*(t^*)$ and $x(T)=x^*(T)$. Then the MEL equations will have the same form but the constant will change,
\begin{equation*}
	\mathcal{K}_{t^*} = \beta\left(\int_{t^*}^T\, \alpha(\tau,x(\tau))\, d\tau\right),
\end{equation*}
which is \textit{not} constant as $t^*$ changes. Therefore, the tails of an optimal trajectory are \textit{not} optimal and the principle of optimality fails. This means that dynamic programming and Hamilton-Jacobi-Bellman methods will \textbf{not work}.
\section{Application to Exploration}\label{sec:exploration}
We will consider a purely toy example to model information gathered along a path. As a simple model of information growth, suppose that we know $x$ amount of information (where $x\in[0,1]$ as a percentage). Then we take the rate of knowledge growth to be given by
\begin{equation*}
	\dot{x}(t) = a(t)\left(1-x(t)\right), \quad x(0)=0.
\end{equation*}
That is, the rate of information gain is the product of the learning rate $a(t)$ and the amount left to be learned, $1-x$.
This can be solved via separation of variables to give
\begin{equation*}
	x(t) = 1 - \exp\left( -\int_0^t \, a(t) \, dt \right).
\end{equation*}
This inspires the following idea: Let $I[y|\gamma]$ be the information learned about the point $y$ from traversing the path $\gamma$. Suppose that the rate of learning is dependent on distance and decays as a Gaussian. Then we have
\begin{equation}
	I[y|\gamma] = 1 - \exp\left( -\lambda\cdot\int_0^T \, \exp\left(-\frac{\lVert \gamma(t)-y\rVert^2}{2\ell^2} \right) \, dt \right),
\end{equation}
where $\lambda,\ell>0$ are parameters that describe the learning rate and the distance cutoff, respectively.

However, this is the information gathered about only a single point. To determine the total amount of information gathered, we will integrate over $y$. Then,
\begin{equation}\label{eq:total_information}
	\mathcal{I}[\gamma] = \int_{M} \, I[y|\gamma] \cdot \nu(y) \, dy,
\end{equation}
where $\nu:M\to\mathbb{R}$ is the amount of information contained at each point.

For the meantime, let us ignore $\mathcal{I}[\gamma]$ and work with $I[y|\gamma]$ to avoid integration difficulties. This can be put in the form \eqref{eq:submodular_form} where
\begin{equation*}
	\begin{split}
		G(z) &= 1 - \exp(-\lambda\cdot z) \implies \beta(z) = \lambda\cdot\exp(-\lambda\cdot z), \\
		\alpha(x) &= \exp\left(-\frac{\lVert x-y\rVert^2}{2\ell^2}\right) =: k(x,y).
	\end{split}
\end{equation*}
To apply \eqref{eq:MEL!}, we need to compute the derivative of $\alpha$,
\begin{equation*}
	\begin{split}
		\frac{\partial\alpha}{\partial x} &= \frac{1}{\ell^2}\left[y-x\right] \cdot k(x,y).
	\end{split}
\end{equation*}
To choose $L$, we will only penalize the velocity, i.e.
\begin{equation*}
	L = \frac{1}{2}m\lVert \dot{x}\rVert^2,
\end{equation*}
where $m>0$ is a parameter (this will act as a regularizing term). The memory Euler-Lagrange equations are then
\begin{equation}\label{eq:example_MEL}
	\boxed{
		m\ddot{x}(t) = \mathcal{K}\cdot \frac{1}{\ell^2}\cdot \left[x-y\right] \cdot k(x,y), \quad
		0\leq\mathcal{K}\leq \lambda.}
\end{equation}
If we return to the complete problem dealing with $\mathcal{I}[\gamma]$ rather than $I[y|\gamma]$, we have
\begin{equation}\label{eq:full_map_dynamics}
	m\ddot{x} = \frac{1}{\ell^2} \cdot \int_M\, \mathcal{K}_y \cdot (x-y)\cdot k(x,y)\cdot \nu(y)\, dy, \quad 0\leq \mathcal{K} \leq \lambda,
\end{equation}
\textit{where we are ignoring any issues with having an infinite amount of memory potential terms}.

A troublesome aspect of \eqref{eq:full_map_dynamics} is that each point $y\in M$ has its own unknown constant, $\mathcal{K}_y$. Therefore, to fully understand the dynamics we have to determine an \textit{unknown function}, $\mathcal{K}_y$.
\subsection{A test example}\label{sec:small_ex}
To test our MEL equations \eqref{eq:example_MEL}, we will compute a simple example where $M = [0,2]$ such that our conditions can be calibrated against brute-force methods.

Suppose that $\nu(y) = \delta(1-y)$ so all information is concentrated at the center of $M$. Then the information function is (taking $\lambda=1$)
\begin{equation*}
	\mathcal{I}[\gamma] = 1 - \exp\left( - \int_0^T \, \exp\left( -\frac{|\gamma(t)-1|^2}{2\ell^2} \right) \, dt \right).
\end{equation*}
Adding in the risk term as described above, we wish to maximize the following:
\begin{equation}\label{eq:to_optimize}
	\mathcal{I}_R[\gamma] = 1 - \exp\left( - \int_0^T \, \exp\left( -\frac{|\gamma(t)-1|^2}{2\ell^2} \right) \, dt \right) - \int_0^T \, \frac{m}{2}\dot{\gamma}(t)^2 \, dt.
\end{equation}
Suppose that we discretize the interval $[0,2]$ into a uniform partition with $N$ pieces. Then \eqref{eq:to_optimize} can be approximated via
\begin{equation}\label{eq:discrete_information}
	\mathcal{I}_R[\gamma] \approx 1- \exp\left( -\sum_{i=2}^N \, \exp\left( -\frac{|x_i-1|^2}{2\ell^2} \right) \Delta t \right)
	- \frac{m}{2\Delta t} \sum_{i=1}^{N-1} \, \left( x_{i+1} - x_i\right)^2.
\end{equation}
On the other hand, the MEL equations state that the optimal path satisfies the ODE
\begin{equation}\label{eq:cont_to_optimize}
	m\ddot{x} = \mathcal{K}\cdot \frac{1}{\ell^2} \cdot \left( x-1 \right) \cdot k(x,1).
\end{equation}
Results from the discrete optimization, \eqref{eq:to_optimize}, along with the MEL solution, \eqref{eq:cont_to_optimize}, are shown in figure \ref{fig:comparing_brute_MEL}.

\begin{figure}
	\centering
	\begin{subfigure}[t]{0.45\textwidth}
		\centering
		\includegraphics[width=\textwidth]{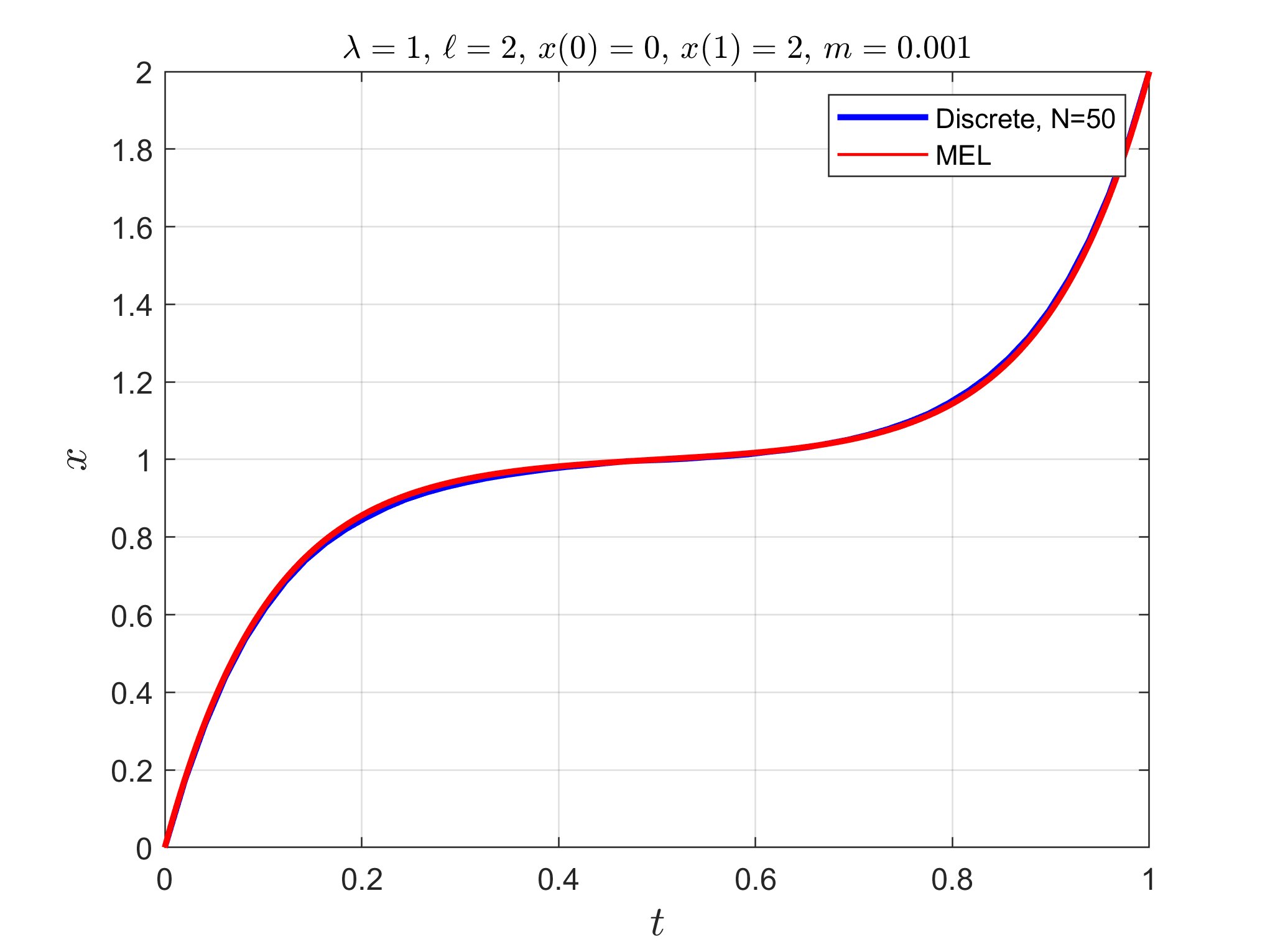}
		\caption{Comparison of the brute-force method with the predicted MEL solution.}
		\label{fig:comparing_brute_MEL}
	\end{subfigure}
	\hfill
	\begin{subfigure}[t]{0.45\textwidth}
		\centering
		\includegraphics[width=\textwidth]{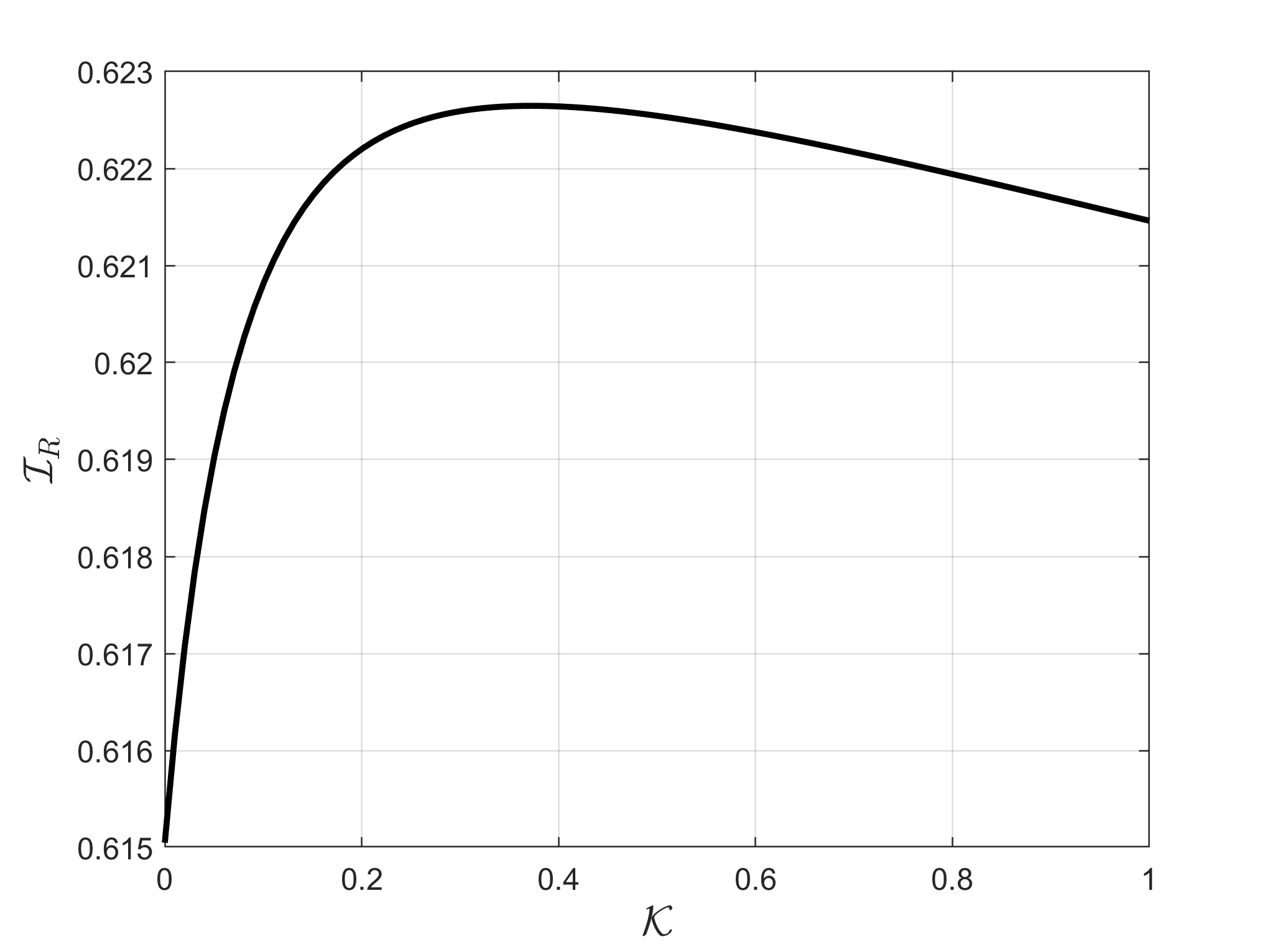}
		\caption{A plot demonstrating how $\mathcal{I}_R$ depends on the parameter $\mathcal{K}$ as described at the end of section \ref{sec:ourMEL}. The optimal $\mathcal{K}$ value is $\mathcal{K}=0.3801$.}
		\label{fig:K_dependency}
	\end{subfigure}
	\caption{Plots depciting the solution to the simple exploration problem laid out in section \ref{sec:small_ex}. The boundary conditions for this example are $x(0)=0$ and $x(1)=2$.}
	\label{fig:two_graphs}
\end{figure}
\section{Information Potential}\label{sec:potential}
This path planning problem, problem \ref{prob:orig}, was originally set up with a Lagrangian and solved by variations. However, this problem can instead be posed in the Hamiltonian framework. Recall that our memory Lagrangians have the form
\begin{equation*}
	\mathcal{L}(t,x_t,\dot{x}(t)) = L(t,x(t),\dot{x}(t)) - \alpha(t,x(t))\cdot \beta\left(\int_0^t\, \alpha(\tau,x(\tau))\, d\tau \right).
\end{equation*}
Suppose that $L$ is hyperregular (so the fiber derivative is a global diffeomorphism) and let $H:T^*M\to\mathbb{R}$ be its Legendre transform. Then the equations of motion, \eqref{eq:MEL!}, are equivalent to 
\begin{equation}\label{eq:forced_Hamiltonian}
	i_X\omega = dH + \mathcal{K}\cdot \pi_M^*d\alpha_t,
\end{equation}
where $\omega$ is the standard symplectic form on $T^*M$, $i_X\omega = \omega(X,\cdot)$ is the interior contraction, and $\pi_M:T^*Q\to Q$ is the cotangent projection. The time derivative of $H$ is
\begin{equation}
	\begin{split}
		\frac{d}{dt}H &= dH(X) = \omega(X_H,X) = \left(-i_X\omega\right)(X_H) \\
		&= -dH(X_H) - \mathcal{K}\cdot\pi_M^*d\alpha(X_H) \\
		&= -\mathcal{K}\cdot d\alpha(v) =: \rho(v).
	\end{split}
\end{equation}
This shows that the ``information potential'' does work on the system. As the potential is conservative ($\rho$ is exact), we can integrate. Define the following function as a modification of the energy,
\begin{equation}
	\mathcal{H}_\mathcal{K} = H + \mathcal{K}\cdot \pi_M^*\alpha,
\end{equation}
which \textit{is} conserved along the trajectory.
\begin{remark}
	The modified energy $\mathcal{H}_\mathcal{K}$ along with \eqref{eq:forced_Hamiltonian} seem to imply that the MEL equations can be taken to be Hamiltonian. However, this is a much more subtle question. This is because $\mathcal{K}$ depends on each trajectory, i.e. it is not a global constant. The actual flow is a conglomeration of a family of different Hamiltonians. It is not at all obvious that these should glue together to be Hamiltonian.
\end{remark}
This section concludes with two subsection. The first is a computation on how $\mathcal{I}_R[\gamma_t]$ changes along the trajectory, i.e. how information is accumulated and how this is related to $\rho$. The second subsection attempts to construct a time-1 map and examine whether or not it is Hamiltonian/symplectic.
\subsection{Information accumulation}
We consider how $\mathcal{I}_R[\gamma_t]$ changes with $t$. Call the functions
\begin{equation*}
	\begin{split}
		A(t) &= G\left( \int_0^t\, \alpha(\tau,\gamma(\tau))\, d\tau \right), \\
		A_R(t) &= G\left( \int_0^t \, \alpha(\tau,\gamma(\tau))\, d\tau \right) - \int_0^t \, L(\tau,\gamma(\tau),\dot{\gamma}(\tau))\, d\tau.
	\end{split}
\end{equation*}
The function $A$ records the total amount of information gathered up to time $t$ while $A_R$ is the risk-adjusted amount of information. We would expect (due to the problem being submodular) that $A$ (or $A_R$) would be concave down. However, this is not necessarily true. Computing derivatives yields:
\begin{equation*}
	\begin{split}
		A'(t) &= \alpha(t,\gamma(t)) \cdot G'\left( \int_0^t\, \alpha(\tau,\gamma(\tau))\, d\tau \right) \\
		A''(t) &= d\alpha_t\left(\dot{\gamma}(t)\right) \cdot G'\left(\int_0^t\, \alpha(\tau,\gamma(\tau))\, d\tau \right) + \\
		&\quad\quad + \alpha(t,\gamma(t))^2\cdot G''\left(\int_0^t\, \alpha(\tau,\gamma(\tau))\, d\tau \right).
	\end{split}
\end{equation*}
The second term in $A''$ is negative when $\mathcal{I}_R$ is submodular (as $G$ is concave down and $\alpha$ is positive). However, the first term can make $A''>0$. Notice this first term is strikingly similar to the information power. Let $\mathcal{K}_t$ be given as follows:
\begin{equation*}
	\mathcal{K}_t = G'\left(\int_0^t\, \alpha(\tau,\gamma(\tau))\, d\tau \right),
\end{equation*}
Then we have
\begin{equation*}
	A''(t) = \mathcal{K}_t\cdot d\alpha_t(v) + P(t),
\end{equation*}
where $P$ is the second (negative) term in $A''$. While the information power is given by $\rho(v) = \mathcal{K}_T\cdot d\alpha_t(v)$.
\subsection{Construction of a time-1 map}\label{sec:time1}
We want to construct a map $\varphi:T^*M\to T^*M$ which is a time-1 map for the memory flow. However, this is not straight-forward since the parameter $\mathcal{K}$ changes for each starting and final position. We get around this issue by the following way, \textit{although we do not assert that this is the canonical way to do so}. Recall that the value of $\mathcal{K}$ is given by
\begin{equation*}
	\mathcal{K} = \beta\left(\int_0^T\, \alpha(\tau,x(\tau))\, d\tau \right).
\end{equation*}
When $\mathcal{I}$ is submodular, this function is increasing in time. We can therefore construct a function $\kappa:T^*M\times\mathbb{R}\to \mathbb{R}$ such that
\begin{equation*}
	K = \beta\left( \int_0^{\kappa(x,p;K)} \, \alpha(\tau,x(\tau))\, d\tau \right).
\end{equation*}
We invert this function and get
\begin{equation*}
	K = \Lambda(x,p) \iff \kappa(x,p;K) = 1.
\end{equation*}
With this function $\Lambda$, we can finally construct a time-1 map:
$\varphi:T^*M\to T^*M$ where $\varphi(x,p)$ is the time-1 map of $X_\Lambda$ where
\begin{equation*}
	i_{X_\Lambda}\omega = dH + \Lambda\cdot \pi_M^*d\alpha_t.
\end{equation*}
\begin{remark}
    Optimization problems of the form \eqref{eq:origonal_problem} can also be handled via the maximum principle. An object of future work is to see how this definition of a time-1 map melds with the answers given by the maximum principle.
\end{remark}
\section{An Analytic Example}\label{sec:analytic}
We present an example where most items of interest have a closed-form solution. Consider the following problem.
\begin{equation}\label{eq:catenary}
	\arg\min \frac{m}{2}\, \int_0^T\, \dot{x}(t)^2 \, dt + \left(\int_0^T\, \frac{k}{2}x(t)^2 \, dt \right)^{1/2}.
\end{equation}
We will first consider solutions to this problem and then attempt to construct the time-1 map and determine whether or not it is symplectic.
\subsection{A solution}
Let us impose the boundary conditions $T=1$, $x(0)=0$, and $x(1)=1$. The MEL equation for this problem is 
\begin{equation}\label{eq:analytic_ODE}
	m\ddot{x} = \mathcal{K}\cdot kx, \quad \mathcal{K} = \frac{1}{2}\left( \int_0^1 \, \frac{k}{2}x(t)^2 \, dt \right)^{-1/2}.
\end{equation}
The solution to this ordinary differential equation is (treating $\mathcal{K}$ as a constant and using the prescribed boundary conditions above)
\begin{equation*}
	x_\mathcal{K}(t) = \frac{1}{\sinh{\sqrt{a}}}\cdot \sinh(\sqrt{a}t), \quad a = \frac{\mathcal{K}k}{m}.
\end{equation*}
Plugging in this solution (which depends on $\mathcal{K}$) back into the functional, we get
\begin{equation}\label{eq:J_K}
	\mathcal{J}(\mathcal{K}) = \frac{2\mathcal{K}k+A}{8\sinh^2\sqrt{a}} + \sqrt{ \frac{ A - 2\mathcal{K}k}{8\mathcal{K}\sinh^2\sqrt{a}}}, \quad 
	A = m\sqrt{a}\sinh(2\sqrt{a}).
\end{equation}
The optimal solutions corresponds to the maximizer of \eqref{eq:J_K}. This happens (by theorem \ref{thm:main}) when $\mathcal{K}$ satisfies \eqref{eq:analytic_ODE}. We need a fixed point of 
\begin{equation}\label{eq:f_K}
	\mathcal{K}^* = \frac{1}{2}\left(\int_0^T\, \frac{k}{2}x_\mathcal{K}(t)^2 \, dt \right)^{-1/2} = \frac{1}{2}\left( \frac{A - 2\mathcal{K}^*k}{8\mathcal{K}^*\sinh^2\sqrt{a}} \right)^{-1/2} =: f(\mathcal{K}).
\end{equation}

\begin{figure}
	\centering
	\begin{subfigure}[t]{0.45\textwidth}
		\centering
		\includegraphics[width=\textwidth]{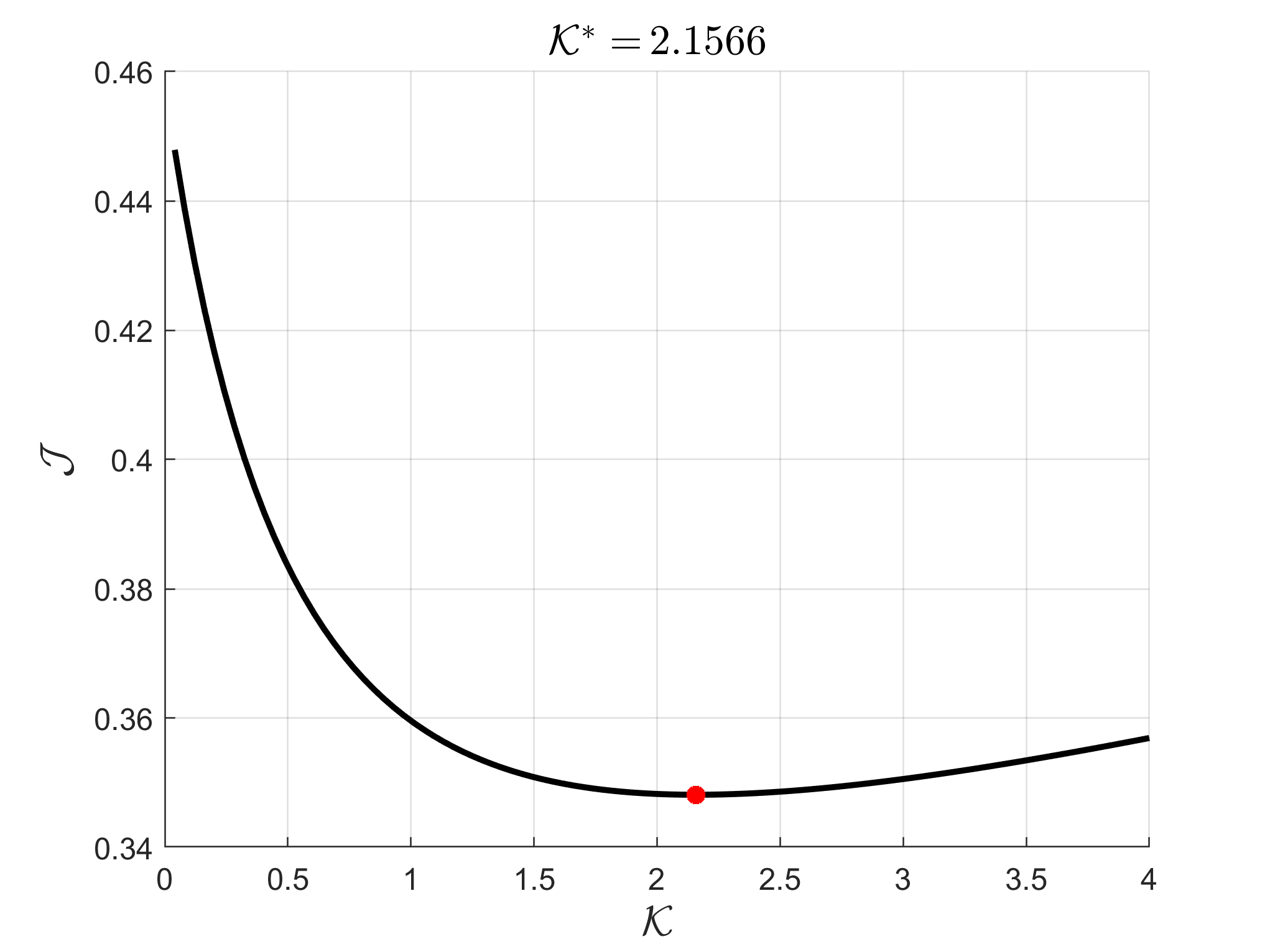}
		\caption{A plot of $\mathcal{J}$ from \eqref{eq:J_K}.}
		\label{fig:minimizer_catenary}
	\end{subfigure}
	\hfill
	\begin{subfigure}[t]{0.45\textwidth}
		\centering
		\includegraphics[width=\textwidth]{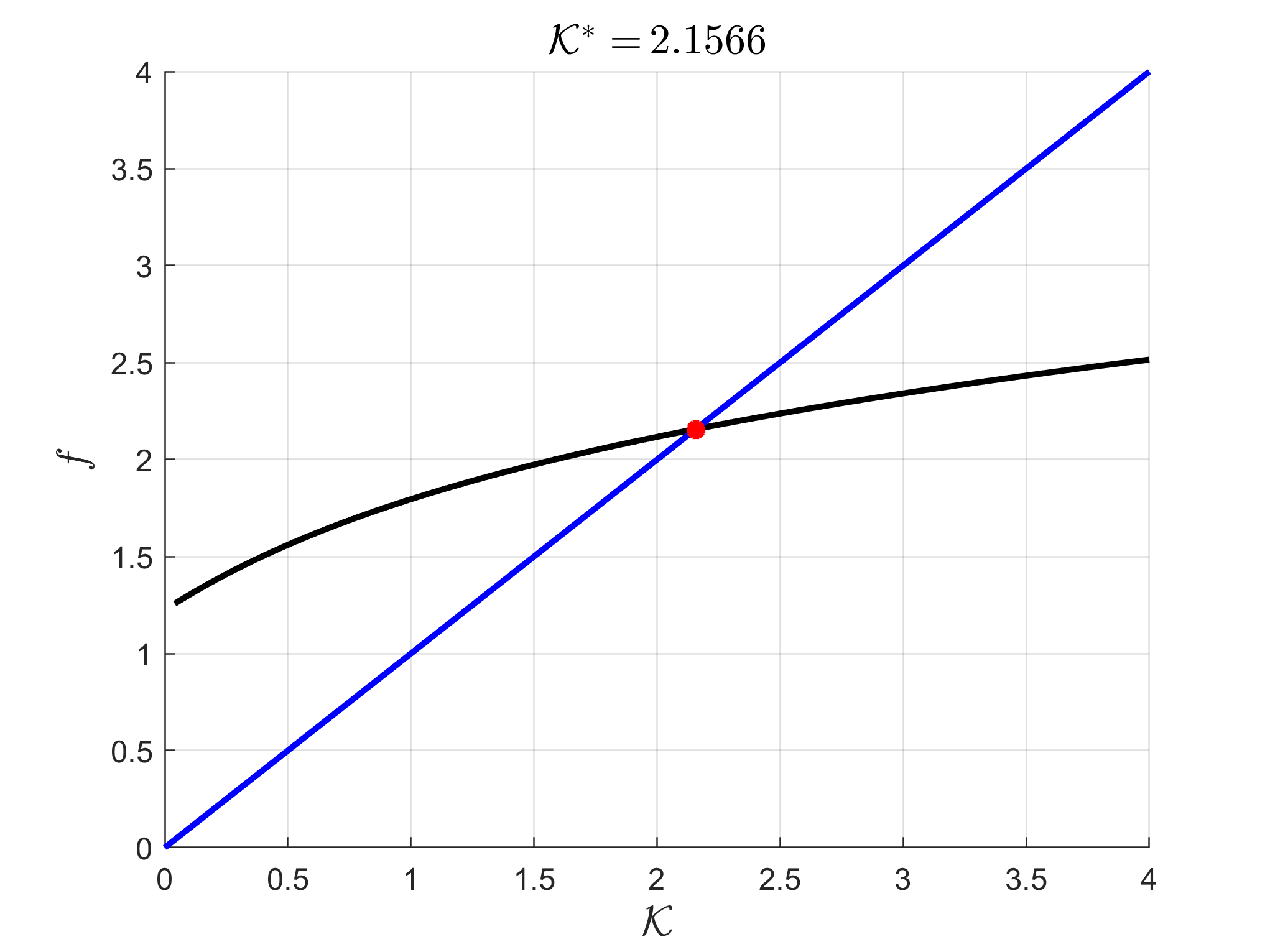}
		\caption{A plot of $f$ from \eqref{eq:f_K}.}
		\label{fig:fixed_catenary}
	\end{subfigure}
	\caption{The minimizer from \eqref{eq:J_K} agrees with the fixed-point from \eqref{eq:f_K} to obtain $\mathcal{K}^* = 2.1566$. This example used $m=0.1$ and $k=1$.}
	\label{fig:analytic_example}
\end{figure}

\subsection{The time-1 map}
We wish to determine the time-1 map for this example as outlined in section \ref{sec:time1}. The function $\Lambda$ has to satisfy
\begin{equation}\label{eq:Lambda_catenary}
	1 = \left[\Lambda(x,p)\right]^2\cdot C(x,p,\Lambda(x,p)),
\end{equation}
where $b= \sqrt{a}$ and
\begin{equation*}
	C(x,p,K) = \frac{2px}{K}\sinh^2(b) + kx^2 + \frac{kx^2}{2b}\sinh(2b) - \frac{p^2}{Km} + \frac{p^2}{2bKm}\sinh(2b).
\end{equation*}
We numerically implement two computations. First, we compute $\Lambda$ via \eqref{eq:Lambda_catenary}. Second, we compute $J$ where $J\cdot\omega = \varphi^*\omega$ with $\omega = dx\wedge dp$. In particular, the flow is symplectic if and only if $J\equiv 1$.

\begin{figure}
	\centering
	\begin{subfigure}[t]{0.45\textwidth}
		\centering
		\includegraphics[width=\textwidth]{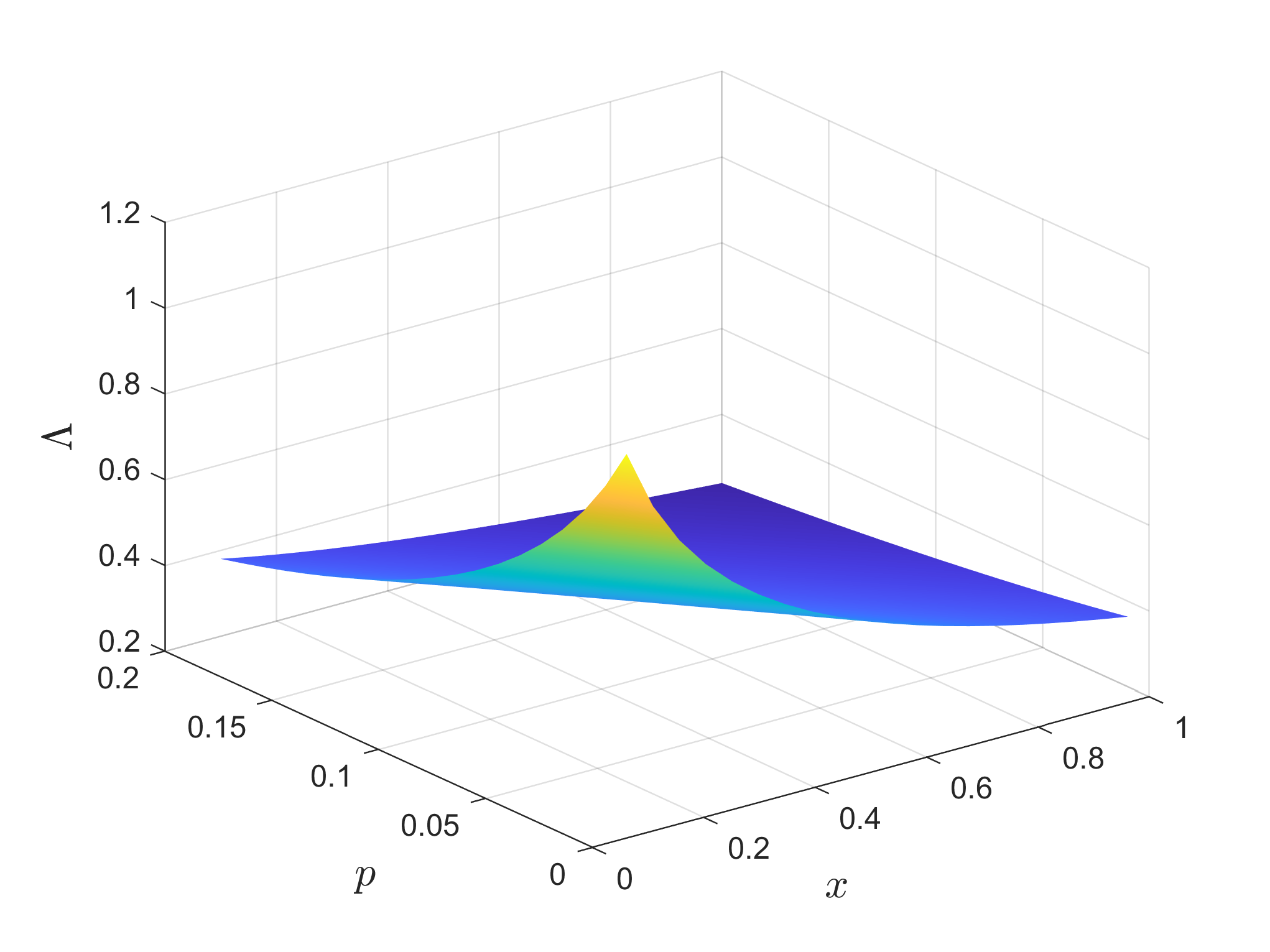}
		\caption{A plot of $\Lambda$ from \eqref{eq:Lambda_catenary}.}
		\label{fig:catenary_Lambda}
	\end{subfigure}
	\hfill
	\begin{subfigure}[t]{0.45\textwidth}
		\centering
		\includegraphics[width=\textwidth]{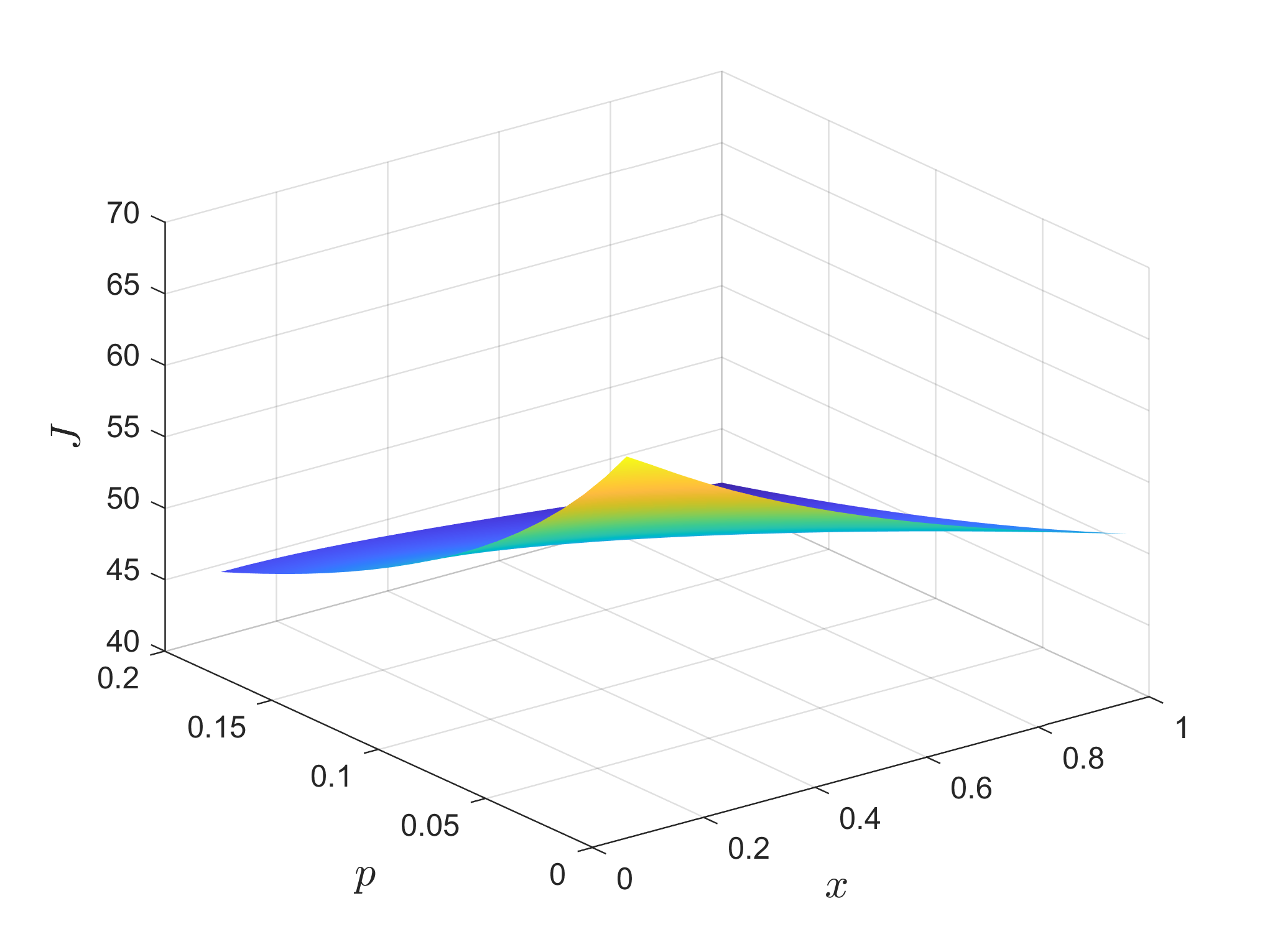}
		\caption{A plot of $J$ from $J\cdot\omega = \varphi^*\omega$. It is clear that $J\ne 1$ and therefore, the time-1 map is \textit{not symplectic}.}
		\label{fig:catenary_jacobian}
	\end{subfigure}
	\caption{Plots demonstrating that the time-1 map as discussed in section \ref{sec:time1} is not symplectic for the example given by \eqref{eq:catenary}. The parameters taken are $m=0.1$ and $k=1$. The derivative of $\varphi$ is computed via a difference quotient with a width of 0.001.}
	\label{fig:not_symplectic}
\end{figure}

\section{A Numerical Example}\label{sec:numerical}
The line example is useful for verifying the validity of \eqref{eq:example_MEL} by comparing it to the brute-force solution. We now proceed to a much more interesting example where $V=\mathbb{R}^2$. To do this, we will use \eqref{eq:full_map_dynamics}. However, determining the unknown function $\mathcal{K}_y$ will be quite difficult so we suppose that \textbf{there are only three points of interest}. The equations of motion are now
\begin{equation}
	m\ell^2\ddot{x} = \sum_{i}\, \nu(y_i)\cdot \mathcal{K}_{y_i}\cdot (x-y_i) \cdot k(x,y_i), \quad \mathcal{K}_{y_i}\in [0,1]
\end{equation}
For the purposes of this example, we will take the following:
\begin{equation*}
	x(0) = \begin{bmatrix}
		0 \\ 0
	\end{bmatrix}, \quad x(T) = \begin{bmatrix}
		0 \\ 2
	\end{bmatrix}, \quad y_1 = \begin{bmatrix}
		1 \\ 2
	\end{bmatrix}, \quad y_2 = \begin{bmatrix}
		2 \\ 0
	\end{bmatrix}, \quad y_3 = \begin{bmatrix}
		2 \\ 1
	\end{bmatrix},
\end{equation*}
and $\nu(y_i)=1$ for all $i$. The parameters for this example are: $m=0.1$, $T = 3$, $\ell=1$, and $\lambda = 1$.

Figure \ref{fig:numeric_example} shows the trajectory and the velocity plots. It is clear that the solution is attracted to informative points (i.e., points of interest) to maximize the collected information along the trajectory.  
\begin{figure}
	\centering
	\begin{subfigure}[t]{0.45\textwidth}
		\centering
		\includegraphics[width=\textwidth]{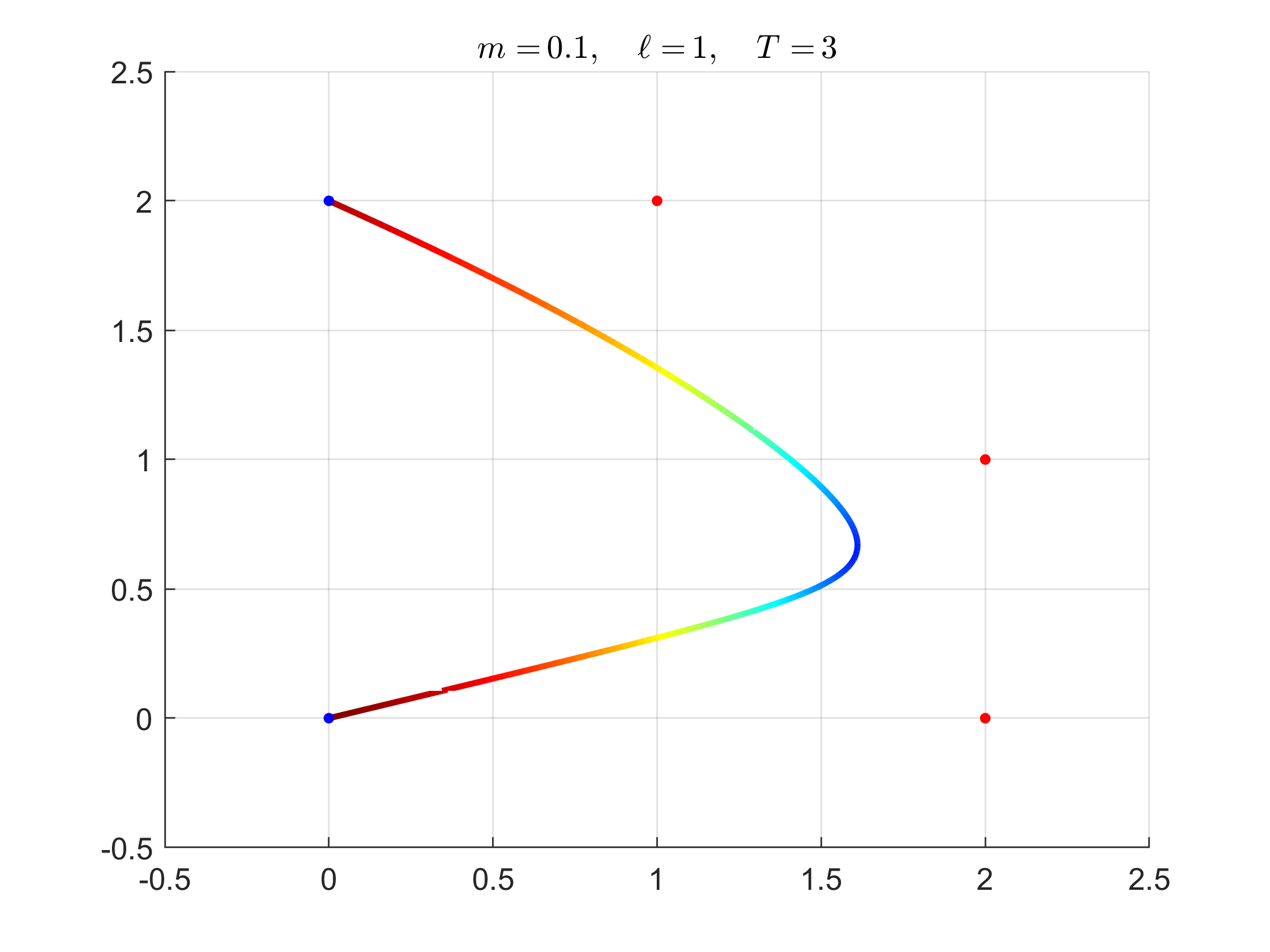}
		\caption{A plot of the trajectory. The color of the trajectory corresponds to its velocity, see figure \ref{fig:example_velocity}.}
		\label{fig:example_path}
	\end{subfigure}
	\hfill
	\begin{subfigure}[t]{0.45\textwidth}
		\centering
		\includegraphics[width=\textwidth]{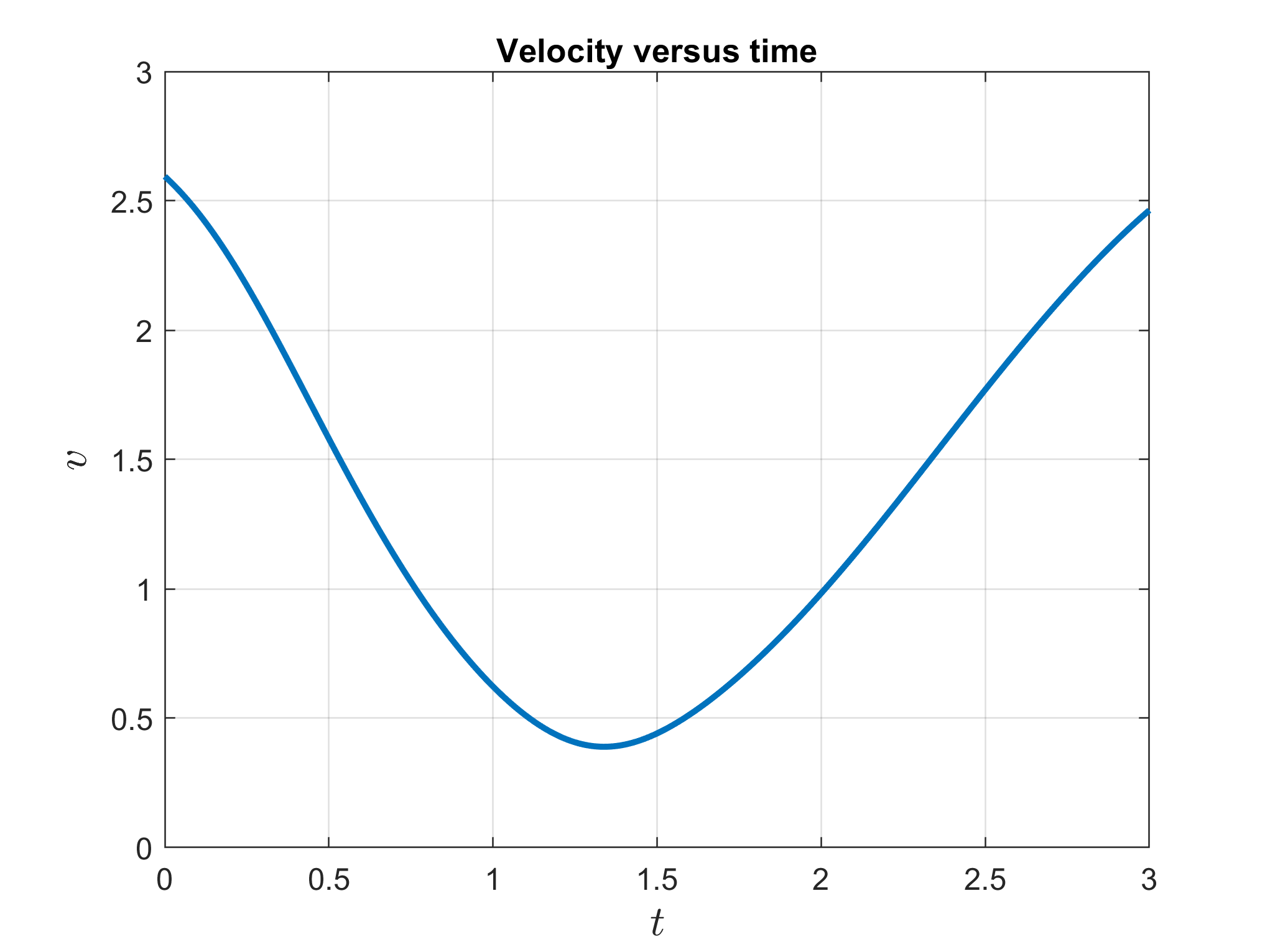}
		\caption{A plot of the velocity versus time.}
		\label{fig:example_velocity}
	\end{subfigure}
	\caption{Plots demonstrating the numerical example from section \ref{sec:numerical}.}
	\label{fig:numeric_example}
\end{figure}

\section{Conclusions}\label{sec:conclusions}
We developed a path-dependent variational framework to deal with submodular information functions in information gathering problems. The construction of the first-order necessary optimality conditions for a class of memory Lagrangians, under some non-restrictive regularity assumptions, resulted in the memory Euler-Lagrange equations. Moreover, we showed that these equations generalize the classical Euler-Lagrange equations when the Lagrangian does not depend on the past.

The numerical examples provided insight in how a trajectory for information maximization can be obtain. This result can form the building block of a more complex system for robotic exploration problems that we shall study in the future. Several interesting extensions of this paper are as follows.
\begin{itemize}
	\item The control problem where a system dynamics contraints the memory Lagrangian (recall that HJB does NOT work).
	\item The addition of a $\dot{\gamma}$ term to $\alpha$.
	\item An algorithmic implementation of the proposed framework and better strategies to handle $\mathcal{K}$.
	\item To compare with the maximum principle. Can our framework handle optimization problems that cannot be understood by invoking the maximum principle.
\end{itemize}

\section*{Acknowledgments}
We would like to thank Jessy W. Grizzle and Andy Borum for helpful conversations.

\bibliographystyle{plain}
\bibliography{references}
\end{document}